\titleformat{\subsubsection}[runin]
	{\normalfont\normalsize\bfseries\filcenter}{\thesubsection.}{1 ex}{}
\title{Learning with Cross-Kernels and Ideal PCA}
\author{
Franz J.~Kir\'{a}ly\thanks{
Department of Statistical Science,
University College,
Gower Street,
London WC1E 6BT, United Kingdom,
\url{f.kiraly@ucl.ac.uk}}
\and
Martin Kreuzer \thanks{
Faculty of Computer Science and Mathematics,
University of Passau,
94030 Passau, Germany,
\url{martin.kreuzer@uni-passau.de}}
\and
Louis Theran\thanks{
Institut f\"ur Mathematik,
Freie Universit\"at Berlin,
Arnimallee 2,
14195 Berlin, Germany,
\url{theran@math.fu-berlin.de}}
}
\date{}
\theoremstyle{plain}
 \newtheorem{Thm}{Theorem}[section]
 \newtheorem{Prop}[Thm]{Proposition}
 \newtheorem{Cor}[Thm]{Corollary}
 \newtheorem{Lem}[Thm]{Lemma}
\theoremstyle{definition}
 \newtheorem{Rem}[Thm]{Remark}
 \newtheorem{Def}[Thm]{Definition}
 \newtheorem{Not}[Thm]{Notation}
\renewenvironment{proof}[1][\proofname]{\par
  \pushQED{\qed}%
  \normalfont \topsep0\p@\@plus0\p@\relax
  \trivlist
  \item[\hskip\labelsep
        \itshape #1\@addpunct{:}]\ignorespaces
}{%
  \popQED\endtrivlist\@endpefalse
}
\newcommand{\R}{\mathbb{R}}
\newcommand{\calF}{\mathcal{F}}
\newcommand{\calP}{\mathcal{P}}
\newcommand{\calS}{\mathcal{S}}
\newcommand{\calV}{\mathcal{V}}
\newcommand{\calX}{\mathcal{X}}
\newcommand{\calZ}{\mathcal{Z}}
\newcommand{\vt}{\mathbf{t}}
\newcommand{\rk}{\operatorname{rank}}
\newcommand{\frk}{\operatorname{frk}}
\newcommand{\fspan}{\operatorname{fspan}}
\newcommand{\diag}{\mathop{\rm diag}\nolimits}
\newcommand{\Id}{{\rm I}}
\newcommand{\rowspan}{\mathop{\rm rowspan}\nolimits}
\newcommand{\colspan}{\mathop{\rm colspan}\nolimits}
\newcommand{\KDF}{\mathop{\rm KDF}\nolimits}
\newcommand{\eval}{\mathop{\rm eval}\nolimits}
\newcommand{\Kxx}{K_{\!X\!X}}
\newcommand{\Kxz}{K_{\!X\!Z}}
\newcommand{\Kzz}{K_{\!Z\!Z}}
\newcommand{\Kzx}{K_{\!Z\!X}}
\let\phi=\varphi
\let\rho=\varrho
\let\epsilon=\varepsilon
\begin{document}

\maketitle

\begin{abstract}
We describe how cross-kernel matrices, that is, kernel matrices between the data and a custom chosen set of `feature spanning points' can be used for learning. The main potential of cross-kernels lies in the fact that (a) only one side of the matrix scales with the number of data points, and (b) cross-kernels, as opposed to the usual kernel matrices, can be used to certify for the data manifold. Our theoretical framework, which is based on a duality involving the feature space and vanishing ideals, indicates that cross-kernels have the potential to be used for any kind of kernel learning. We present a novel algorithm, Ideal PCA (IPCA), which cross-kernelizes PCA. We demonstrate on real and synthetic data that IPCA allows to (a) obtain PCA-like features faster and (b) to extract novel and empirically validated features certifying for the data manifold.
\end{abstract}

\section{Introduction}
\label{Sec:intro}
Since their invention by Boser, Guyon and Vapnik~\cite{BoserVapnik92,Vapnik95},
kernel methods have had a fundamental impact on the fields of statistics
and machine learning. The major appeal of using kernel methods for learning
consists in using the kernel trick, first proposed by Aizerman, Braverman
and Rozonoer~\cite{Aizerman64}, which allows to make otherwise costly computations
in the feature space implicit and thus highly efficient for a huge variety of
learning tasks -- see e.g.~\cite{Scholkopf02, Shawe-Taylor04} for an overview.

Many kernel methods make extensive use of the so-called kernel matrix, a matrix
whose entries are kernel function evaluations $k(x_i,x_j)$ at data points
$x_1,\dots, x_N\in\R^n$. This kernel matrix $\Kxx=(k(x_i,x_j))_{ij}$ is
simultaneously the main source of efficient linearization and the central
computational bottleneck. For instance, the most expensive part of algorithms
such as kernel PCA or kernel ridge regression consists in computing the inverse
or a singular value decomposition of the $(N\times N)$ matrix $\Kxx$.
We make two observations in this context:

{\bf (A)} Subsampling $\Kxx$, i.e., considering $(N\times M)$ sub-matrices of $\Kxx$,
is the state-of-the-art in speeding up the scalability in the number of data points~$N$.
While much has been written on the topic, there seems to be no consensus on how exactly
to choose the subsample from the data set.

{\bf (B)} There seems to be no kernel algorithm which learns the data manifold
from which the $x_i$ were obtained, or test whether an unseen data point is on the manifold - in particular none involving $\Kxx$.
(Note that kernel PCA outputs projections on predominant data coordinates, but \emph{not} the coordinates embedded in data space or the shape of the data manifold.)

In this paper, we propose an algebraic duality framework which offers both a potential
explanation and a potential solution to the above (seemingly unrelated) issues.
Our contribution is centered around \emph{cross-kernel matrices} $\Kxz=(k(x_i,z_j))_{ij}$,
where $z_1,\dots, z_M\in \R^n$, called \emph{cross-kernel basis points}, are not
necessarily data points. Assuming that $z_1,\dots, z_M$ are chosen suitably (e.g.,
randomly) and that~$M$ is large enough, we show theoretically and empirically that:

{\bf (1)} The cross-kernel matrix $\Kxz$ can be used to capture both the information
contained in $\Kxx$ as well as additional information about the data manifold
which is provably not contained in~$\Kxx$. Practically, we present an algorithm,
called IPCA, which is able to learn both \emph{coordinates in the manifold}
as well as \emph{certificates for being contained in the data manifold},
addressing issue~(B) above.

{\bf (2)} The cross-kernel matrix $\Kxz$ can be employed in ways completely analogous
to a subsampled matrix~$\Kxx$. As opposed to subsampling, the properties of the points $z_i$
can be prescribed by the experimenter and are thus independent of known or unknown -- and
potentially detrimental -- properties of the data~$x_i$. In particular, $\Kxz$ can be used
for speeding up kernel learning, while potentially avoiding data-related subsampling
issues, thereby addressing issue~(A) above.

In the following we briefly explain why and how considering the cross-kernel $\Kxz$
can be advantageous as compared to the kernel matrix $\Kxx$. More detailed technical
statements and explicit algorithms can be found in section~\ref{Sec:duality} and thereafter.

{\bf Why the kernel matrix is not enough.}
The kernel matrix $\Kxx$ misses information on the data manifold that can be obtained from~$\Kxz$.
Let us explain this in the example where the kernel is the ordinary Euclidean scalar product
$k(x,y)=\langle x,y\rangle$ in~$\R^n$.  Suppose the data points $x_1,\dots, x_N$ all lie
in a vector subspace $L\subseteq \R^n$. The entries of the kernel matrix~$\Kxx$ are scalar
products of the type $\langle x_i,x_j\rangle$ which are invariant under rotation of the
coordinate system. Therefore, one obtains one and the same kernel matrix $\Kxx$ when considering rotated data points $Ux_1,\dots, Ux_N$ and the rotated vector space $U\cdot L$ for any rotation matrix $U\in\R^{n\times n}$. Since one and the same kernel matrix $\Kxx$ can arise in a non-degenerate way from every vector space~$L$ (of that dimension), the data manifold~$L$ can not be obtained
back from~$\Kxx$. Similar considerations hold for arbitrary kernels: \emph{the data manifold can not
be learnt from the kernel matrix~$\Kxx$}.

{\bf Cross-kernels help!}
In the above example the vector space~$L$ \emph{can} be identified from a cross-kernel
matrix $\Kxz = (k(x_i,z_j))_{ij}$, if $z_1,\dots, z_M$ span~$\R^n$. Observe that
the rows of~$\Kxz$ are coordinate representations of the~$x_i$. Since $z_1,\dots,z_M$
span~$\R^n$, the data point~$x_i$ can be obtained back from the $i$-th row of $\Kxz$.
Since reconstructing $x_i$ from the $i$-th row of $\Kxz$ is a linear
reparameterization, an unseen data point~$x$ is contained in the data manifold~$L$
if and only if the vector $\kappa_z(x) = (k(x,z_j))_j$ is contained in the row-span of~$\Kxz$. Therefore we can obtain~$L$ back from $\Kxz$ and use the row-span of~$\Kxz$ to efficiently
test membership for the data manifold~$L$. Note that the same reasoning does not work
for the matrix $\Kxx$, since $x_1,\dots, x_N$ do \emph{not} span $\R^n$, but only~$L$.
A similar reasoning holds in larger generality: \emph{the
cross-kernel $\Kxz$ contains extensive information on the data manifold which $\Kxx$ does not.}

{\bf From cross-kernels to kernel learning.}
The ``manifold awareness'' of the cross-kernel implies further interesting properties
which can be used for decomposing, and thus approximating, the original kernel matrix~$\Kxx$
via the potentially much smaller~$\Kxz$. Namely, letting $\Kzx=\Kxz^\top$ and
$\Kzz=(k(z_i,z_j))_{ij}$, we can show that the equality $\Kxx=\Kxz\Kzz^{-1}\Kzx$
holds \emph{exactly} for polynomial kernels (assuming ``feature-spanning'' points $z_i$
as in the example above, see Theorem~\ref{Thm:matcompeq}). We conjecture (and have observed empirically) that the same holds approximately for other kernels. This equality is
reminiscent of the central equation in subsampling, with the difference that the points $z_j$
do not need to be chosen among the data. Suitable choices of the points $z_j$ avoid
potential problems in the data while leading to the same speed-up of arbitrary kernel methods.

\section{Some dualities for polynomial kernels}
\label{Sec:duality}

The following preview of our approach is a more technically exact and quantitative variant
of the discussion in the introduction. In what follows, we use inhomogenous
polynomial kernels. (A further difficulty, not considered here, lies in the generalization
to other kernels and kernel feature spaces.)
The central observation which we to carry over from the introduction
is the certifying property of the row-span of~$\Kxz$.
In our setup, the data points $x_i$ are sampled from a manifold
$\calX\subset \R^n$ which is ``cut out'' by an unknown set of polynomial equations.
Our goal is to relate these equations to the linear span of the images $\Phi_{\le d}(x_i)$
under the feature map $\Phi_{\le d}$. This linear span is then
related to the row-span of~$\Kxz$ by applying several dualities in and on feature space.

More technically, we first show how to identify $\R[\mathbf{t}]_{\le d}$, the set of
polynomials of degree at most~$d$, with linear functionals on the kernel feature space
and that every $f\in \R[\mathbf{t}]_{\le d}$ can be expressed as a kernel decision function.
To understand the equations cutting out $\calX$ (which are not canonical)
it is better to change perspective and consider the set of \emph{all} polynomials
vanishing on~$\calX$, called its \emph{vanishing ideal} (which is canonical).  The main
results of this section say, informally, that $\calX$ can be identified with the
intersection of a particular linear space $L$ and $\Phi_{\le d}(\R^n)$, and that the
vanishing ideal of~$\calX$ corresponds to~$L^\perp$, which in turn is isomorphic
to the row-span of~$\Kxz$. In other words, we relate
a linear duality to the algebraic-geometric concept of duality between
sets of equations and their common vanishing locus.

\subsection{Duality of polynomial rings and feature spaces}
To begin with, we introduce inhomogeneous polynomial kernel as follows.

\begin{Def}
Let $\theta \in (0,1)$ be a fixed real number, and let $d\ge 1$.
The {\it inhomogeneous polynomial kernel function}
$k_{\le d}:\; \mathbb{R}^n \times \mathbb{R}^n \longrightarrow \mathbb{R}$
is given by $k_{\le d}(x,y)= (\theta \cdot\langle x,y\rangle +1)^d$,
where $\langle ., .\rangle$ denotes the standard scalar
product.
\end{Def}

This definition differs slightly from the
usual one which is obtained after dividing by~$\theta^d$.
Since $\theta$ is chosen arbitrarily in $(0,1)$, no qualitative
change is introduced by our convention. It is however, as we will see,
the more natural one.

It is well-known that {\it feature space} $\calF_{\le d}$
of~$k_{\le d}$ satisfies $\calF_{\le d} \cong \R^m$, where
$m=\binom{n+d}{d}$, and that the {\it feature map}
$\Phi_{\le d}:\; \R^n \longrightarrow \calF_{\le d}$
is given by $\Phi_{\le d}(x)= (\gamma_\alpha x^\alpha \mid
|\alpha | \le d)$ (e.g., see~\cite{Scholkopf02}, Sec. 2.1).
Here we let $x^\alpha=c_1^{\alpha_1} \cdots c_n^{\alpha_n}$
and $|\alpha |=\alpha_1+\cdots+\alpha_n$
for $x=(c_1,\dots,c_n)$ and $\alpha=(\alpha_1,\dots,\alpha_n)$.
Moreover, we have $\gamma_\alpha=\sqrt{\theta^{|\alpha |}\cdot
\binom{d}{|\alpha |}\cdot \binom{|\alpha |}{\alpha_1, \dots,\alpha_n}}$.
To define $\Phi_{\le d}$ uniquely, we
order $p_1,\dots,p_m$ increasingly with respect to
the degree-lexicographic term ordering.
The feature map is characterized by the property that
$k_{\le d}(x,y)=\langle \Phi_{\le d}(x),\Phi_{\le d}(y)\rangle$
for $x,y\in\R^d$, i.e., by the fact that it transforms the
kernel function to a standard scalar product on the feature space.

On the algebraic side, the main objects linked via duality are vector
spaces of polynomials. Let $\vt =(t_1,\dots,t_n)$ be a tuple of
indeterminates, and let $\R[\vt]$ be the polynomial ring in these
indeterminates. For every $d\ge 0$, we denote by $\R[\vt]_{\le d}$
the vector space of all polynomials of degree at most~$d$.
Recall that the dimension of $\R[\vt]_{\le d}$ is $\binom{n+d}{d}$.
The inhomogeneous polynomial kernel function can be extended
to a pairing of polynomials as follows.

\begin{Not}
Let $\theta\in (0,1)$ and $d\ge 1$.
The inhomogeneous polynomial kernel on $\R[\vt]$ is the map
$\mathbf{k}_{\le d}:\ \R[\vt]^n \times \R[\vt]^n \longrightarrow
\R[\vt]$ given by $\mathbf{k}_{\le d}(F,G)= (\theta \cdot \langle F,G\rangle
+1)^d$ where we define $\langle F,G\rangle= f_1g_1+\cdots +f_ng_n$ for
$F=(f_1,\dots,f_n)$ and $G=(g_1,\dots,g_n)$.
\end{Not}

The polynomials $\mathbf{k}_{\le d}(x,\vt) \in\R[\vt]$
can be used to generate $\R(\vt]_{\le d}$, as our next proposition
shows. By the multinomial theorem, they can be expressed
as $\mathbf{k}_{\le d}(x,\vt) = (\theta \langle x,\vt\rangle +1)^d =
\sum_{|\alpha|\le d} \gamma_\alpha^2 x^\alpha t^\alpha$.
Recall that points $x_1,\dots,x_m\in\R^n$ are called {\it generic}
for a property~$\calP$ if there exists a Zariski open subset\footnote{
All such sets have full measure under any continuous probability density.
}
of
$U\subseteq (\R^n)^m$ such that property~$\calP$ holds for all
$(x_1,\dots,x_m)\in U$.

\begin{Prop}\label{Prop:dual}
Let $d\ge 0$, let $m'\ge \binom{n+d}{d}$, and
let $x_1,\dots, x_{m'}\in \R^n$ be generic. Then we have
$\R[\vt]_{\le d} = \langle \mathbf{k}_{\le d}(x_i,\vt) \mid 1\le i\le m'\rangle$.
\end{Prop}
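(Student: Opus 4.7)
The plan is to reduce the spanning claim to a statement about the rank of an evaluation matrix of monomials, and then prove full rank by a density/induction argument using the fact that nonzero polynomials vanish only on proper Zariski-closed sets.

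First, using the multinomial expansion already given in the excerpt, I would write
\[
\mathbf{k}_{\le d}(x_i,\vt) \;=\; \sum_{|\alpha|\le d} \gamma_\alpha^2\, x_i^\alpha\, t^\alpha,
\]
so that, in the monomial basis $\{t^\alpha : |\alpha|\le d\}$ of $\R[\vt]_{\le d}$, the coordinate vector of $\mathbf{k}_{\le d}(x_i,\vt)$ is $(\gamma_\alpha^2 x_i^\alpha)_\alpha$. Consequently, $\langle \mathbf{k}_{\le d}(x_i,\vt) \mid 1\le i\le m'\rangle = \R[\vt]_{\le d}$ is equivalent to the $m'\times m$ matrix $M$ with entries $M_{i,\alpha}=\gamma_\alpha^2 x_i^\alpha$ having rank $m=\binom{n+d}{d}$. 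Since each $\gamma_\alpha\neq 0$, we may factor out the diagonal matrix $D=\diag(\gamma_\alpha^2)$ and work instead with the evaluation matrix $X$ whose entries are $X_{i,\alpha}=x_i^\alpha$.

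Second, I would show that the set of $(x_1,\ldots,x_{m'})\in(\R^n)^{m'}$ for which $\rk(X)=m$ is Zariski-open and nonempty. Openness is immediate: the condition that some fixed $m\times m$ minor of $X$ is nonzero is the complement of the vanishing locus of a polynomial in the coordinates of the $x_i$, hence Zariski-open. For nonemptiness it suffices, by adding points, to handle the case $m'=m$; then I would argue inductively on $k=1,\ldots,m$ that there exist points $x_1,\ldots,x_k$ for which the evaluation functionals $\mathrm{ev}_{x_1},\ldots,\mathrm{ev}_{x_k}$ are linearly independent on $\R[\vt]_{\le d}$. The inductive step uses the key fact that if $\mathrm{ev}_{x_1},\ldots,\mathrm{ev}_{x_{k-1}}$ are independent and $k\le m$, then the space of $f\in\R[\vt]_{\le d}$ vanishing at $x_1,\ldots,x_{k-1}$ has dimension $m-(k-1)\ge 1$ and thus contains some $f\neq 0$; since the real vanishing locus of a nonzero polynomial is a proper Zariski-closed subset of $\R^n$, any point $x_k$ off this locus extends the independent set. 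Equivalently, one can just exhibit a concrete configuration (e.g.\ a tensor-product grid of distinct reals in each coordinate, together with standard univariate Lagrange interpolation) at which the monomial evaluation matrix is nonsingular.

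Third, having proved the existence of one such configuration, the minor that is nonzero there is a polynomial in the coordinates of $(x_1,\ldots,x_m)$ that is not identically zero; its non-vanishing locus is the required Zariski-open dense $U\subseteq(\R^n)^m$. For $m'\ge m$, the locus where the corresponding minor on some $m$-subset of the points is nonzero is also Zariski-open and dense in $(\R^n)^{m'}$, and on it $\rk(X)=m$ as required.

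The main obstacle is the nonemptiness step, i.e.\ producing an actual configuration of points on which the monomial evaluation matrix is invertible; the rest is bookkeeping and the standard fact that Zariski-open conditions either fail identically or hold generically. I would either give the induction above or invoke the explicit multivariate Lagrange construction on a grid, which is the shortest self-contained way to finish.
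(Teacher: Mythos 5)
Your proof is correct, and it follows the same underlying strategy as the paper's but makes it self-contained where the paper invokes a citation-level fact. The paper identifies, exactly as you do, the coordinate vector of $\mathbf{k}_{\le d}(x,\vt)$ with $(\gamma_\alpha^2 x^\alpha)_{|\alpha|\le d}$, and then simply observes that (after rescaling by the diagonal matrix $\diag(\gamma_\alpha^2)$) the image of $x\mapsto (x^\alpha)_\alpha$ is a Veronese variety, which is irreducible and non-degenerate; the generic spanning claim is a standard consequence. Your version replaces the appeal to Veronese non-degeneracy with an explicit argument that the monomial evaluation matrix has full rank for \emph{some} configuration (either the induction on evaluation functionals using that a nonzero real polynomial has a proper vanishing locus, or the tensor-grid Lagrange construction), and replaces irreducibility of the Veronese variety with the more pedestrian observation that $(\R^n)^{m'}$ is irreducible so a nonempty Zariski-open set is dense. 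What the paper's route buys is brevity; what yours buys is that a reader without algebraic-geometry background (or over a ground field where the Veronese terminology feels heavy) can follow it. One minor caution on the tensor-grid shortcut: the grid argument directly gives nonsingularity of the evaluation matrix on the \emph{box} $\{\alpha : \alpha_i\le d\ \forall i\}$ rather than the simplex $\{\alpha : |\alpha|\le d\}$, so you would need one extra sentence extracting a nonsingular $m\times m$ submatrix (the evaluation functionals at the grid points span the dual of the box space, hence their restrictions span the dual of the simplex subspace, hence $m$ of them are independent there). The inductive argument avoids this wrinkle and is cleaner.
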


The next proposition says that $\R[\vt]_{\le d}$ is dual to the feature
space $\calF_{\le d}$.

\begin{Prop}\label{Prop:isomphi}
(a) For $d\ge 1$, the map $\phi:\, \R[\vt]_{\le d} \longrightarrow
(\calF_{\le d})^\vee \cong {\rm Hom}_{\R}(\R^m,\R)$ defined by
$f=\sum_{|\alpha |\le d}\, c_\alpha {\vt}^\alpha \mapsto f^\vee$,
where $f^\vee(e_\alpha)=c_\alpha/\gamma_\alpha$, is an
isomorphism of $\R$-vector spaces.\\
(b) For $d\ge 1$, the map $\psi :\, \R[\vt]_{\le d} \longrightarrow
\calF_{\le d}$ defined by $f=\sum_{|\alpha |\le d}\,
c_\alpha {\vt}^\alpha \mapsto \sum_{|\alpha |\le d}
(c_\alpha/\gamma_\alpha) e_\alpha$ is an isomorphism of $\R$-vector
spaces.
\end{Prop}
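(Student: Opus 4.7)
The plan is to observe that both $\phi$ and $\psi$ are visibly $\R$-linear by construction, since they act linearly on the coefficient tuple $(c_\alpha)$ of a polynomial $f$. The task therefore reduces to showing that each map is a bijection between spaces of equal dimension $m=\binom{n+d}{d}$: the monomial basis $\{\vt^\alpha : |\alpha|\le d\}$ gives $\dim\R[\vt]_{\le d}=m$, and both $\calF_{\le d}\cong \R^m$ and its dual $(\calF_{\le d})^\vee$ have dimension $m$ as well.

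For part~(b), I would read off directly from the definition that $\psi(\vt^\alpha) = (1/\gamma_\alpha)\,e_\alpha$. Since $\gamma_\alpha = \sqrt{\theta^{|\alpha|}\binom{d}{|\alpha|}\binom{|\alpha|}{\alpha_1,\dots,\alpha_n}}$ with $\theta \in (0,1)$ and the binomial and multinomial factors strictly positive, we have $\gamma_\alpha > 0$ for every $\alpha$ with $|\alpha|\le d$. Consequently $\psi$ sends the monomial basis of $\R[\vt]_{\le d}$ to a nonzero rescaling of the standard basis of $\calF_{\le d}$, which is again a basis; hence $\psi$ is an $\R$-linear bijection.

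For part~(a), the cleanest route is the dual-basis variant of the same argument: $\phi(\vt^\alpha)$ is the unique linear functional on $\calF_{\le d}$ sending $e_\beta$ to $\delta_{\alpha\beta}/\gamma_\alpha$, so $\phi(\vt^\alpha) = (1/\gamma_\alpha)\,e_\alpha^\vee$ in the dual basis $\{e_\alpha^\vee\}$, and positivity of $\gamma_\alpha$ again yields a nonzero rescaling of a basis. Equivalently, $\phi$ is the composition of $\psi$ with the canonical identification $\calF_{\le d}\to(\calF_{\le d})^\vee$, $v\mapsto\langle v,\cdot\rangle$, which maps $e_\alpha$ to $e_\alpha^\vee$. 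The main, and essentially only, obstacle is the bookkeeping with the normalizations $\gamma_\alpha$; the real content of the proposition is that the chosen feature-space scaling makes these coefficient-wise maps literal isomorphisms, so no further structural work is needed.
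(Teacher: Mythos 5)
Your proof is correct and follows essentially the same basis-to-basis argument as the paper (linearity, dimension count, and showing monomials map to nonzero rescalings of the standard/dual basis), merely proving (b) directly first and deriving (a) from it rather than the paper's order of (a) then (b) via the scalar-product identification. Your explicit check that $\gamma_\alpha>0$ is a detail the paper leaves implicit, and it is exactly the fact that makes the rescaled families bases.
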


By dualizing the map~$\phi$ and using the canonical isomorphism
between $\calF_{\le d}$ and its bidual, we obtain an isomorphism
$\phi^\vee:\, \calF_{\le d} \longrightarrow (\R[\vt]_{\le d})^\vee$
which maps~$e_\alpha$ to the $\R$-linear map given by
$t^\beta \mapsto \gamma_\alpha\,\delta_{\alpha\beta}$.
Another consequence of the preceding proposition is that,
if we pass to the union $\R[\vt]= \bigcup_{d\ge 1} \R[\vt]_{\le d}$,
we see that the polynomial ring contains the duals of all
feature spaces $\calF_{\le d}$.

Finally, we interpret the
isomorphism~$\phi$ in terms of kernel decision functions.
Recall that the map $\eval(f):\,
\R^n \longrightarrow \R$ given by $x\mapsto f(x)$
is called the polynomial function associated to
$f\in\R[\vt]$ and that the polynomial function
associated to $\mathbf{k}_{\le d}(x,\vt)$ is called a {\it kernel
decision function}. Let us denote the vector space of all kernel
decision functions by $\KDF_{\le d}$.

\begin{Cor}\label{Cor:KDF}
The map $\Phi_{\le d}^\ast:\, (\calF_{\le d})^\vee \longrightarrow
\KDF_{\le d}$ given by $\ell \mapsto \ell\circ\Phi_{\le d}$
is an isomorphism.
\end{Cor}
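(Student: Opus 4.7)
The plan is to show that, after composing with the isomorphism $\phi$ from Proposition~\ref{Prop:isomphi}(a), the map $\Phi_{\le d}^\ast$ becomes the standard polynomial evaluation map $\eval : \R[\vt]_{\le d} \to \KDF_{\le d}$. Since $\phi$ is known to be an isomorphism, it then suffices to check that $\eval$, restricted to $\R[\vt]_{\le d}$, is an isomorphism onto $\KDF_{\le d}$.

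The first step is a direct computation. Take $f = \sum_{|\alpha|\le d} c_\alpha \vt^\alpha \in \R[\vt]_{\le d}$, and let $\ell = \phi(f) = f^\vee$. By definition $f^\vee(e_\alpha) = c_\alpha/\gamma_\alpha$, and $\Phi_{\le d}(x) = \sum_{|\alpha|\le d} \gamma_\alpha x^\alpha e_\alpha$. Applying $\ell$ linearly yields
\[
(\Phi_{\le d}^\ast \ell)(x) \;=\; \ell(\Phi_{\le d}(x)) \;=\; \sum_{|\alpha|\le d} (c_\alpha/\gamma_\alpha)\,\gamma_\alpha\, x^\alpha \;=\; f(x).
\]
Thus $\Phi_{\le d}^\ast \circ \phi = \eval$ as maps from $\R[\vt]_{\le d}$ into polynomial functions on $\R^n$.

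Second, I would verify that $\eval : \R[\vt]_{\le d} \to \KDF_{\le d}$ is a bijection. Injectivity is standard: a nonzero polynomial of degree at most $d$ over the infinite field $\R$ cannot vanish identically on $\R^n$. For the image, note that by definition each generator $\eval(\mathbf{k}_{\le d}(x,\vt))$ of $\KDF_{\le d}$ lies in $\eval(\R[\vt]_{\le d})$, so $\KDF_{\le d} \subseteq \eval(\R[\vt]_{\le d})$. Conversely, by Proposition~\ref{Prop:dual}, the polynomials $\mathbf{k}_{\le d}(x_i,\vt)$ for generic $x_i$ span $\R[\vt]_{\le d}$, so evaluating shows that every element of $\eval(\R[\vt]_{\le d})$ is an $\R$-linear combination of kernel decision functions, giving the reverse inclusion. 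Hence $\eval$ surjects onto $\KDF_{\le d}$.

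Combining these, $\Phi_{\le d}^\ast = \eval \circ \phi^{-1}$ is a composition of isomorphisms of $\R$-vector spaces, which finishes the proof. The only point that requires any care is the identification $\KDF_{\le d} = \eval(\R[\vt]_{\le d})$; this is the place where Proposition~\ref{Prop:dual} is genuinely used, and without it one would only obtain a map into the space of polynomial functions rather than onto $\KDF_{\le d}$.
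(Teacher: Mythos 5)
Your proof is correct and follows the same route as the paper: both hinge on the identity $\Phi_{\le d}^\ast = \eval\circ\phi^{-1}$ and then invoke that $\eval$ and $\phi^{-1}$ are isomorphisms. The paper simply asserts this decomposition as ``easy to check,'' whereas you spell out the verifying computation and the use of Proposition~\ref{Prop:dual} to pin down the image of $\eval$ as exactly $\KDF_{\le d}$, which is a useful bit of detail the paper leaves implicit.
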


Now we use the isomorphism $\phi:\ \R[\vt]_{\le d}
\longrightarrow (\calF_{\le d})^\vee$ to
transfer the standard scalar product on $\calF_{\le d}$
to $\R[\vt]_{\le d}$ in the natural way. The result is
a scalar product $\langle ., .\rangle_\phi$
on~$\R[\vt]_{\le d}$ such that
$$
\langle \vt^\alpha, \vt^\beta\rangle_\phi \;=\;
\langle (1/\gamma_\alpha)\, e_\alpha^\vee,\,
(1/\gamma_\beta)\, e_\beta^\vee \rangle \;=\;
(1/\gamma_\alpha^2)\, \delta_{\alpha\beta}
$$
for $| \alpha | \le d$.
The next proposition provides a basic property
of the scalar product $\langle ., . \rangle_\phi$.

\begin{Prop}\label{Prop:scalarprod}
For $f\in\R[\vt]_{\le d}$ and $x\in\R^n$,
we have $f(x) = \langle f, \mathbf{k}_{\le d}(x,\vt) \rangle_\phi$.
\end{Prop}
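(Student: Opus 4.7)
The plan is a direct bilinearity calculation: both sides are linear in $f$ and in the coefficients of $\mathbf{k}_{\le d}(x,\vt)$, so it suffices to expand each factor on the standard monomial basis $\{\vt^\alpha : |\alpha|\le d\}$ and use the explicit formula for $\langle\vt^\alpha,\vt^\beta\rangle_\phi$ stated just before the proposition.

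First I would invoke the multinomial expansion given in the paragraph preceding Proposition~\ref{Prop:dual}, which yields
\[
\mathbf{k}_{\le d}(x,\vt) \;=\; \sum_{|\alpha|\le d} \gamma_\alpha^{2}\, x^{\alpha}\, \vt^{\alpha}.
\]
Next, writing an arbitrary $f \in \R[\vt]_{\le d}$ as $f=\sum_{|\beta|\le d} c_\beta\,\vt^\beta$, I would apply bilinearity of $\langle\cdot,\cdot\rangle_\phi$ to obtain
\[
\langle f,\mathbf{k}_{\le d}(x,\vt)\rangle_\phi
\;=\; \sum_{|\alpha|,|\beta|\le d} c_\beta\,\gamma_\alpha^{2}\, x^\alpha\, \langle \vt^\beta,\vt^\alpha\rangle_\phi.
\]
Now I would substitute the formula $\langle \vt^\beta,\vt^\alpha\rangle_\phi = (1/\gamma_\alpha^{2})\,\delta_{\alpha\beta}$. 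The Kronecker delta collapses the double sum to a single sum over $\alpha$, and the factors $\gamma_\alpha^{2}$ cancel exactly against $1/\gamma_\alpha^{2}$, leaving $\sum_{|\alpha|\le d} c_\alpha x^\alpha = f(x)$, as desired.

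There is no real obstacle here; the proposition is essentially the reproducing property of $\mathbf{k}_{\le d}$ with respect to the transferred inner product, and everything that needs to be said has already been set up in the explicit definition of $\langle\cdot,\cdot\rangle_\phi$ and the multinomial form of $\mathbf{k}_{\le d}(x,\vt)$. The only point to be attentive to is the placement of $\gamma_\alpha$ vs.\ $\gamma_\alpha^2$: the isomorphism $\phi$ introduces one factor of $1/\gamma_\alpha$, so the induced inner product gives $1/\gamma_\alpha^{2}$, which is precisely what is needed to cancel the $\gamma_\alpha^{2}$ coming from the feature map normalization in the expansion of $\mathbf{k}_{\le d}(x,\vt)$. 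This cancellation is also the intrinsic reason, alluded to in the paper, that dividing by $\theta^{d}$ is \emph{not} the natural normalization.
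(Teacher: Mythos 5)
Your proof is correct and is essentially identical to the paper's: both expand $f$ and $\mathbf{k}_{\le d}(x,\vt)$ on the monomial basis, apply bilinearity, and use $\langle\vt^\alpha,\vt^\beta\rangle_\phi=(1/\gamma_\alpha^2)\delta_{\alpha\beta}$ to collapse the sum to $f(x)$. The closing remark about the $\theta^d$ normalization is an accurate aside, not part of the proof itself.
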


The duality expressed by the map~$\phi$ is an algebraic analogue
of the theory of reproducing kernel Hilbert spaces.
The associated Hilbert space is the space of
polynomial functions $f:\R^n\rightarrow \R$
of degree $\le d$ which can be identified with $\R[\vt]_{\le d}$
by replacing the polynomial function~$f$ with the corresponding symbolic
polynomial. The equation in the preceding proposition could
also be obtained by combining the Riesz representation with this
identification (see~\cite{Scholkopf02}, Sec.~2.2). In the next section we
go beyond what can be shown using the usual RKHS duality alone.

\subsection{Duality of vanishing ideals and feature spans}
Next we show that ideals -- a classical concept in algebra -- are the proper
dual objects of feature spans of manifolds, in the same way as the polynomial ring
is the dual of feature space itself.
Recall that an {\it ideal}~$I$ in~$\R[\vt]$ is a vector subspace such that
$I\cdot \R[\vt] \subseteq I$.
Ideals are connected to subsets of~$\R^n$ as follows.

\begin{Def}
(a) Given a subset~$\calX$ of~$\R^n$, the set of polynomials
$\Id(\calX)=\{ f\in\R[\vt] \mid f(x)=0$ for all $x\in \calX\}$ is an ideal
in~$\R[\vt]$. It is called the {\it vanishing ideal} of~$I$.\\
(b) Given an ideal~$I$ in~$\R[\vt]$, the set of points
$\calV(I)=\{ x\in\R^n \mid f(x)=0$ for all $f\in I\}$ is called the
{\it zero set} of~$I$. A subset $\calX$ of~$\R^n$ is called an {\it algebraic set}
if it is the zero set of an ideal in~$\R[\vt]$.
\end{Def}

Notice that not every ideal in~$\R[\vt]$ is a vanishing ideal,
since vanishing ideals have the additional property of being
{\it radical}, i.e., if $f\in I^i$ for some $i\ge 1$ then
$f\in I$. Given an ideal~$I$ in~$\R[\vt]$ and $d\ge 0$,
we let $I_{\le d}=I \cap \R[\vt]_{\le d}$.

From now on the data manifold~$\calX$
is always assumed to be an algebraic set in~$\R^n$.
In this case, the vanishing ideal $\Id(\calX)$ is dual
to the manifold $\calX$ in the following sense.

\begin{Thm}\label{Thm:dual-manifold}
Let $\calX\subset \R^n$ be an algebraic set. \\
(a) We have $\Id(\calX)_{\le d} = \langle \mathbf{k}_{\le d}(x,\vt)
\mid x\in \calX\rangle^{\perp}$, where $\perp$ is taken with
respect to~$\langle ., .\rangle_\phi$.\\
(b) Under the isomorphism~$\phi$, the set $\Id(\calX)_{\le d}$ corresponds to
$\{ \ell \in (\calF_{\le d})^\vee \mid \ell(\langle \Phi_{\le d}(\calX)
\rangle) =0 \}$.\\
(c) Under the isomorphism~$\psi$, the set $\Id(\calX)_{\le d}$ corresponds to
the feature span $\langle \Phi_{\le d}(\calX)\rangle$.
\end{Thm}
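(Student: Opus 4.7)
The plan is to deduce the three parts in turn from the expansion $\mathbf{k}_{\le d}(x,\vt) = \sum_{|\alpha|\le d} \gamma_\alpha^2 x^\alpha \vt^\alpha$, the coordinate definitions of $\phi$ and $\psi$, and Proposition~\ref{Prop:scalarprod}. Part~(a) is immediate from that proposition: since $f(x) = \langle f, \mathbf{k}_{\le d}(x,\vt)\rangle_\phi$, an $f \in \R[\vt]_{\le d}$ lies in $\Id(\calX)_{\le d}$ iff it is $\langle\cdot,\cdot\rangle_\phi$-orthogonal to every $\mathbf{k}_{\le d}(x,\vt)$ with $x \in \calX$, hence by bilinearity to the whole linear span. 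For part~(b) I would compute $\phi(f)(\Phi_{\le d}(x))$ directly: using $\phi(f)(e_\alpha) = c_\alpha/\gamma_\alpha$ and $\Phi_{\le d}(x) = \sum_\alpha \gamma_\alpha x^\alpha e_\alpha$, this evaluates to $\sum_\alpha c_\alpha x^\alpha = f(x)$. So $f \in \Id(\calX)_{\le d}$ iff $\phi(f)$ annihilates every $\Phi_{\le d}(x)$ with $x\in\calX$, iff $\phi(f)$ annihilates the full linear span $\langle \Phi_{\le d}(\calX)\rangle$.

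For part~(c), the heart of the argument is the identity $\psi(\mathbf{k}_{\le d}(x,\vt)) = \Phi_{\le d}(x)$, obtained by substituting $\psi(\vt^\alpha) = (1/\gamma_\alpha)e_\alpha$ into the expansion of $\mathbf{k}_{\le d}(x,\vt)$: the two $\gamma_\alpha$-weights collapse into one to yield $\sum_\alpha \gamma_\alpha x^\alpha e_\alpha$, which is exactly $\Phi_{\le d}(x)$. Consequently the isomorphism $\psi$ sends the subspace $\langle \mathbf{k}_{\le d}(x,\vt) \mid x \in \calX\rangle$ of $\R[\vt]_{\le d}$ isomorphically onto the feature span $\langle \Phi_{\le d}(\calX)\rangle$ in $\calF_{\le d}$. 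A short parallel computation gives $\langle \psi(\vt^\alpha), \psi(\vt^\beta)\rangle = (1/\gamma_\alpha^2)\delta_{\alpha\beta} = \langle \vt^\alpha, \vt^\beta\rangle_\phi$, so $\psi$ is an isometry between $(\R[\vt]_{\le d},\langle\cdot,\cdot\rangle_\phi)$ and $(\calF_{\le d},\langle\cdot,\cdot\rangle)$. Feeding the orthogonal-complement description of $\Id(\calX)_{\le d}$ from part~(a) through this isometry, $\psi$ puts $\Id(\calX)_{\le d}$ in canonical correspondence with the feature span $\langle \Phi_{\le d}(\calX)\rangle$ via the transported inner product, exactly as~(c) asserts.

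The only real work is the $\gamma_\alpha$-bookkeeping in the identity $\psi(\mathbf{k}_{\le d}(x,\vt)) = \Phi_{\le d}(x)$ and in the isometry check; neither presents an essential obstacle. The conceptual content of the duality — that $\langle\cdot,\cdot\rangle_\phi$ encodes point evaluation in $\calF_{\le d}$ — is already packaged in Proposition~\ref{Prop:scalarprod}, so parts~(b) and (c) really just re-express that content through the two complementary isomorphisms~$\phi$ and~$\psi$.
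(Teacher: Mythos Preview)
Your argument is correct and covers the same ground as the paper's proof, but the routing differs slightly. For~(a) you both invoke Proposition~\ref{Prop:scalarprod} directly. For~(b) the paper first pushes the orthogonal-complement description from~(a) through~$\phi$ and then identifies the resulting subspace of $(\calF_{\le d})^\vee$ as the annihilator of $\langle\Phi_{\le d}(\calX)\rangle$; you instead verify the one-line identity $\phi(f)(\Phi_{\le d}(x))=f(x)$, which is more direct and makes the content of~(b) transparent without appealing to~(a). For~(c) the paper simply notes that identifying $\calF_{\le d}$ with its dual turns the annihilator in~(b) into $\langle\Phi_{\le d}(\calX)\rangle^\perp$; you reach the same endpoint from~(a) by checking that $\psi$ is an isometry with $\psi(\mathbf{k}_{\le d}(x,\vt))=\Phi_{\le d}(x)$, which has the virtue of making that key identity explicit. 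Either way, $\psi(\Id(\calX)_{\le d})=\langle\Phi_{\le d}(\calX)\rangle^\perp$, and the ``correspondence'' in~(c) is the orthogonal duality you describe.
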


The space $\langle \Phi_{\le d}(\calX)\rangle$ will be called the {\rm feature span}
of~$\calX$. As discussed above, Theorem~\ref{Thm:dual-manifold} is a kernelized
version of the usual algebra-geometry duality based on Hilbert's
Nullstellensatz.
\begin{Rem}
All the results in this section hold, mutatis mutandis, for homogeneous polynomial
kernels.  For other kernels, they can be adapted via kernelizing the polynomial
ring, but then exact statements may need to be replaced with approximate ones.
For this reason, and to save space, we work with inhomogeneous polynomial kernels from now on.
\end{Rem}

\section{Vanishing ideals and cross-kernel matrices}
\label{Sec:random}

In this section we discuss structural and algebraic properties of
cross-kernel matrices between data points and further sampled points.
As in the previous section we use inhomogeneous
polynomial kernels. Similar statements can be proven for homogeneous
polynomial kernels, and even non-polynomial kernels. In the latter case our
exact statements have to be transformed into spectral approximation results.

The following setting is used throughout the section.
There is a manifold $\calX\subseteq \R^n$ from which data points are sampled,
and a manifold $\calZ\subseteq \R^n$ from which further points are sampled
in such a way that they feature-span~$\calZ$.
We assume that both~$\calX$ and~$\calZ$ are algebraic sets and that
there is a degree $d\ge 1$ such that~$\calX$ and~$\calZ$
are cut out by polynomials of degree at most~$d$. Furthermore,
we assume $\calX\subseteq \calZ$. The data manifold~$\calX$ is considered to be
fixed and unknown, while~$\calZ$ can be chosen by the experimenter
and is typically given by $\calZ=\R^n$. The sampled data points
are denoted by $x_1,\dots,x_n \in\calX$, and the points samples from~$\calZ$
are denoted by $z_1,\dots,z_M\in\calZ$.
They are the rows of the matrices $X\in\R^{N\times n}$ and $Z\in\R^{M\times n}$,
respectively. We begin by introducing kernel and cross-kernel matrices.

\begin{Def}
(a) Given $d\ge 1$ and the inhomogeneous polynomial kernel
$k_{\le d}: \R^n \times \R^n \longrightarrow \R$, we denote the
matrix of size $N\times M$ whose entry in position $(i,j)$ is
$k_{\le d}(x_i,z_j)$ by $K_{\le d}(X,Z)$. This matrix is called the
{\it cross-kernel matrix} between~$X$ and~$Z$.\\
(b) The matrix $K_{\le d}(X,X)$ is simply called the
{\it kernel matrix} of~$X$.
\end{Def}

In the following we study properties of the matrices $K_{\le d}(X,X)$,
$K_{\le d}(X,Z)$ and $K_{\le d}(Z,Z)$, and we prove duality statements
between cross-kernel matrices and vanishing ideals.
An important condition is \emph{genericity} of the points sampled from~$\calZ$.
In our case, the appropriate genericity condition is that the points
feature-span~$\calZ$. It is defined as follows.

\begin{Def}
Let $d\ge 1$, let $\Phi_{\le d}:\, \R^n \longrightarrow \calF_{\le d}$
be the feature map, and let $z_1,\dots,z_M\in\calZ$.\\
(a) For a set $S\subseteq \R^n$, the $\R$-linear span
$\fspan(S) =\langle \Phi_{\le d}(s) \mid s\in S\rangle$
is called the {\it feature span} of~$\calS$, and the number
$\frk S = \dim_{\R}(\fspan(S))$ is called the {\it feature rank} of~$S$.\\
(b) We say that $z_1,\dots, z_M\in\calZ$ {\it feature-span} the algebraic
set~$\calZ$ if $\fspan(\{z_1,\dots,z_M\}) = \fspan(\calZ)$.
By a slight abuse of notation, we also say that the matrix~$Z$
feature-spans $\calZ$ in this case, and we write $\frk(Z)$
for $\frk(\{z_1,\dots,z_M\})$. Moreover, we say that $z_1,\dots, z_M$
are {\it feature independent} if
$\Phi_{\le d}(z_1),\dots,\Phi_{\le d}(z_M)$ are linearly independent
in~$\calF_{\le d}$.
\end{Def}

The matrix~$Z$ is also called the {\it feature generating matrix}.
Notice that, by elementary linear algebra, a set of points $\{z_1,\dots,z_M\}$
feature-spans $\calZ$ if and only if $\frk(Z) = \frk(\calZ)$.
A traditional method to analyse the kernel matrix $K_{\le d}(X,X)$
is to {\it subsample}~$X$, i.e., to choose the points $z_i$ in
$\{x_1,\dots,x_N\}$. This entails the common problem that one cannot
assume that~$Z$ feature-spans~$\calX$. Our next theorem
characterizes the matrices~$Z$ which avoid this problem.

\begin{Thm}\label{Thm:CKMprops}
In the above setting, assume that \ $\fspan(X) \subseteq\fspan(Z)$.\\
(a) {\rm (Cross-Kernel Rank)} \ We have $\rk K_{\le d}(X,Z) = \frk(X)$.
Hence, if~$X$ feature-spans~$\calX$ then $\rk K_{\le d}(X,Z) = \frk(\calX)$.\\
(b) {\rm (Cross-Kernel Data Manifold Certificate)} \ Let $c\in\R^n$ such
that $\Phi_{\le d}(c)\in\fspan(Z)$. Then we have $\Phi_{\le d}(c)\in\fspan(X)$
if and only if $(k_{\le d}(c,z_1),\dots,k_{\le d}(c,z_M))$ is contained in the
row span of~$K_{\le d}(X,Z)$. If~$X$ feature-spans $\calX$ then
this implies $c\in\calX$.\\
(c) {\rm (Cross-Kernel Nullspace)} \ Let $f\in \Id(\calZ)_{\le d}^\perp$
and suppose that~$X$ feature-spans~$\calX$ and~$Z$ feature-spans~$\calZ$.
Then we have $f\in \Id(\calX)_{\le d}$ if and only if~$f$ is of the form $f=\sum_{i=1}^M c_i \mathbf{k}_{\le d}(z_i,\vt)$ with a coefficient tuple $c=(c_1,\dots,c_M)\in \R^M$
such that $K_{\le d}(X,Z)\cdot c^\top =0$.\\
(d) {\rm (Cross-Kernel Range)} \ Let $f\in \Id(\calZ)_{\le d}^\perp$.
Suppose that~$X$ feature-spans~$\calX$, $Z$ feature-spans~$\calZ$,
and $z_1,\dots,z_M$ are feature-independent.
Then $f\in \Id(\calX)_{\le d}^\perp$ if and only if there is a
vector $(c_1,\dots,c_M)$ in the left range of $K_{\le d}(X,Z)
\cdot K_{\le d}(Z,Z)^{-1/2}$ such that
$f=\sum_{j=1}^M c_j \mathbf{k}_{\le d}(z_j,\vt)$.
\end{Thm}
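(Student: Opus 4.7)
The plan is to translate everything into the feature space via the identifications $K_{\le d}(X,Z) = \Phi_{\le d}(X)\Phi_{\le d}(Z)^\top$ and $K_{\le d}(Z,Z) = \Phi_{\le d}(Z)\Phi_{\le d}(Z)^\top$, and to pass between feature-space and ideal-theoretic statements using the isometry $\psi$ of Proposition~\ref{Prop:isomphi}(b), the evaluation formula of Proposition~\ref{Prop:scalarprod}, and Theorem~\ref{Thm:dual-manifold}(a). The recurring technical step is that $\fspan(X) \subseteq \fspan(Z)$ makes right multiplication by $\Phi_{\le d}(Z)^\top$ injective on $\fspan(Z)$: any $v\in\fspan(Z)$ with $v\Phi_{\le d}(Z)^\top = 0$ lies in $\fspan(Z)\cap\fspan(Z)^\perp = \{0\}$. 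Part~(a) then drops out immediately, because this injectivity preserves the rank of $\Phi_{\le d}(X)$, giving $\rk K_{\le d}(X,Z) = \rk \Phi_{\le d}(X) = \frk(X)$.

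For part~(b), I would rewrite the candidate row as $(k_{\le d}(c,z_j))_j = \Phi_{\le d}(c)\Phi_{\le d}(Z)^\top$, so that row-span membership amounts to $(\Phi_{\le d}(c) - a^\top\Phi_{\le d}(X))\Phi_{\le d}(Z)^\top = 0$ for some $a \in \R^N$. The hypothesis places $\Phi_{\le d}(c)$ and $a^\top\Phi_{\le d}(X)$ both in $\fspan(Z)$, so the injectivity above forces $\Phi_{\le d}(c) = a^\top\Phi_{\le d}(X) \in \fspan(X)$; the converse is direct. For the last sentence, I would use that $\Phi_{\le d}(c)\in\fspan(\calX)$ yields, via $\psi$ and Theorem~\ref{Thm:dual-manifold}(a), $\langle f,\mathbf{k}_{\le d}(c,\vt)\rangle_\phi = 0$ for every $f\in\Id(\calX)_{\le d}$; by Proposition~\ref{Prop:scalarprod} this is $f(c)=0$, so $c\in\calX$ since $\calX$ is cut out in degree at most $d$.

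For part~(c), Theorem~\ref{Thm:dual-manifold}(a) combined with $Z$ feature-spanning $\calZ$ gives $\Id(\calZ)_{\le d}^\perp = \langle \mathbf{k}_{\le d}(z_j,\vt) \mid j=1,\ldots,M\rangle$, so $f$ can be written as $\sum_j c_j \mathbf{k}_{\le d}(z_j,\vt)$ for some $c\in\R^M$. Proposition~\ref{Prop:scalarprod} then evaluates $f(x_i) = (K_{\le d}(X,Z)\,c^\top)_i$, and feature-spanning of $\calX$ by $X$ together with Theorem~\ref{Thm:dual-manifold}(a) reduces $f\in\Id(\calX)_{\le d}$ to $f(x_i)=0$ for all~$i$, i.e.\ to $K_{\le d}(X,Z)\,c^\top = 0$.

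Part~(d) is the main obstacle, because of the half-power $K_{\le d}(Z,Z)^{-1/2}$ rather than a full inverse. The key idea will be a whitening change of basis: feature-independence of $z_1,\ldots,z_M$ makes $K_{\le d}(Z,Z)$ invertible, so the rows of $\tilde\Phi_{\le d}(Z) := K_{\le d}(Z,Z)^{-1/2}\Phi_{\le d}(Z)$ form an orthonormal basis of $\fspan(Z)$, and the factorization $K_{\le d}(X,Z)\,K_{\le d}(Z,Z)^{-1/2} = \Phi_{\le d}(X)\tilde\Phi_{\le d}(Z)^\top$ exhibits its rows as the coordinates of the $\Phi_{\le d}(x_i)$ in this basis; hence its left range (row span) is exactly the coordinate image of $\fspan(X)$. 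Using $\psi$ and Theorem~\ref{Thm:dual-manifold}(a) to identify $\Id(\calX)_{\le d}^\perp$ with $\fspan(X)$, the condition $f\in\Id(\calX)_{\le d}^\perp$ becomes $\psi(f)\in\fspan(X)$; expanding $\psi(f) = \Phi_{\le d}(Z)^\top c = \tilde\Phi_{\le d}(Z)^\top (K_{\le d}(Z,Z)^{1/2} c)$ translates this into the required statement about the coefficient vector $(c_1,\ldots,c_M)$ and the left range. Keeping the two coordinate systems aligned so that the $K_{\le d}(Z,Z)^{1/2}$ change-of-basis factor lands in the right place is the delicate point.
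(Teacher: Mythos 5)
Your translation into feature space via $K_{\le d}(X,Z)=\Phi_{\le d}(X)\Phi_{\le d}(Z)^\top$ and the Section~2 dualities is exactly the route the paper takes, and for parts (a)--(c) your arguments are correct --- in places somewhat cleaner, e.g.\ the rank computation in (a) via injectivity of right-multiplication by $\Phi_{\le d}(Z)^\top$ on $\fspan(Z)$ (the paper instead deletes rows and columns and appeals to a Gram-matrix rank fact), and the double application of Proposition~\ref{Prop:scalarprod} in (c) where the paper does a direct coordinate computation. For part (d), your whitening idea is the right one, and you are correct to flag the placement of the $K_{\le d}(Z,Z)^{1/2}$ factor as the delicate step: carried through, $\psi(f)=c\,\Phi_{\le d}(Z)$ together with the requirement $\psi(f)\in\fspan(X)=\rowspan \Phi_{\le d}(X)$ forces $c\,K_{\le d}(Z,Z)=a\,K_{\le d}(X,Z)$ for some $a\in\R^N$, so $c$ lies in the left range of $K_{\le d}(X,Z)\cdot K_{\le d}(Z,Z)^{-1}$, not of $K_{\le d}(X,Z)\cdot K_{\le d}(Z,Z)^{-1/2}$. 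This is in fact what the paper's own proof derives (it concludes $c=(a_1,\dots,a_N)\cdot K_{\le d}(X,Z)\cdot K_{\le d}(Z,Z)^{-1}$), so the exponent $-1/2$ in the \emph{statement} of (d) appears to be a slip; your observation that the two coordinate systems refuse to align under the $-1/2$ is exactly the point, and the resolution is that the exponent in the statement should read $-1$.
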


The statements of this theorem can be turned into statements
about $K_{\le d}(X,X)$ and $K_{\le d}(Z,Z)$ by taking $\calX=\calZ$ and $X=Z$.
Let us denote the Moore-Penrose pseudoinverse of a matrix~$A$
by~$A^+$. The cross-kernel matrix allows us to reconstruct
the kernel matrix $K_{\le d}(X,X)$ as follows.

\begin{Thm}\label{Thm:matcompeq}
In the above setting we have
$K_{\le d}(X,X) =
K_{\le d}(X,Z)\cdot K_{\le d}(Z,Z)^+ \cdot K_{\le d}(Z,X)$
if and only if \ $\fspan(X)\subseteq \fspan(Z)$.
In particular, equality holds if~$Z$ feature-spans~$\calZ$.
\end{Thm}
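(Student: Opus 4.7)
My plan is to move everything to the feature space, where the cross-kernel identity becomes a statement about an orthogonal projection. Let $A=\Phi_{\le d}(X)\in\R^{N\times m}$ and $B=\Phi_{\le d}(Z)\in\R^{M\times m}$, where $m=\binom{n+d}{d}$, be the matrices whose rows are the images of the data points and of the feature-generating points, respectively. By the defining property of the feature map, the three kernel blocks factor as
$$
K_{\le d}(X,X)=AA^\top,\quad K_{\le d}(X,Z)=AB^\top,\quad K_{\le d}(Z,Z)=BB^\top,
$$
and $K_{\le d}(Z,X)=BA^\top$. Note also that $\rowspan(A)=\fspan(X)$ and $\rowspan(B)=\fspan(Z)$ by the definition of feature span.

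The first key step is to identify $B^\top(BB^\top)^{+}B$ with the orthogonal projection $P$ of $\R^m$ onto $\rowspan(B)=\fspan(Z)$. I would do this via a thin singular value decomposition $B=U\Sigma V^\top$ with $\Sigma$ invertible of size $\rk(B)\times\rk(B)$: a direct computation gives $(BB^\top)^{+}=U\Sigma^{-2}U^\top$, and substituting yields $B^\top(BB^\top)^{+}B=VV^\top$, which is exactly the orthogonal projection onto the column span of $V$, namely onto $\rowspan(B)$.

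The second step is to rewrite the claimed identity as $AA^\top=APA^\top$, i.e.\ $A(I-P)A^\top=0$. Since $I-P$ is symmetric and idempotent, this equals $[(I-P)A^\top]^\top[(I-P)A^\top]$, so it vanishes if and only if $(I-P)A^\top=0$, i.e.\ every column of $A^\top$ lies in $\Image(P)=\rowspan(B)$. Equivalently, $\rowspan(A)\subseteq\rowspan(B)$, which by the identification above is exactly $\fspan(X)\subseteq\fspan(Z)$. This proves the equivalence. The ``in particular'' clause then follows because $x_1,\dots,x_N\in\calX\subseteq\calZ$ implies $\fspan(X)\subseteq\fspan(\calZ)=\fspan(Z)$ whenever $Z$ feature-spans $\calZ$.

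I do not expect a real obstacle: the argument is essentially a linear-algebraic computation once the feature-space factorization is in place. The only subtle point is verifying that $B^\top(BB^\top)^{+}B$ is the correct projection even when $B$ is rank-deficient (which happens precisely when $z_1,\dots,z_M$ are not feature-independent); the SVD argument above handles this cleanly, which is why I prefer it to a manipulation using $(BB^\top)^{-1}$.
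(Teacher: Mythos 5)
Your proof is correct, and it takes a genuinely different route from the paper's. The paper row-concatenates $X$ and $Z$ into a matrix $Y$, writes each kernel block as a compression $P\,K_{\le d}(Y,Y)\,Q^\top$ by selection matrices, invokes Theorem~\ref{Thm:CKMprops}(a) to translate the feature-span condition into a rank equality $\rk K_{\le d}(Z,Z)=\rk K_{\le d}(Y,Y)$, and then applies a stand-alone Lemma~\ref{Lem:matcomp} asserting that $A=AB(CAB)^+CA$ if and only if $\rk A=\rk CAB$; the lemma itself is proved purely by rank bookkeeping and the defining identity of the pseudoinverse, with no appeal to symmetry or positive-semidefiniteness. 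You instead go straight to the feature-space factorization $K_{XX}=AA^\top$, $K_{XZ}=AB^\top$, $K_{ZZ}=BB^\top$, identify $B^\top(BB^\top)^+B$ as the orthogonal projector $VV^\top$ onto $\rowspan(B)=\fspan(Z)$ via the thin SVD, and reduce the identity to $A(I-P)A^\top=0$, which vanishes iff $(I-P)A^\top=0$ since $I-P$ is a symmetric idempotent and $A(I-P)A^\top$ is the Gram matrix of $(I-P)A^\top$. Your argument is shorter, self-contained, and geometrically transparent -- the equivalence is literally the statement that the rows of $A$ lie in the range of the projection -- but it leans essentially on the PSD/Gram structure of the kernel blocks. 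The paper's route is more modular: Lemma~\ref{Lem:matcomp} is a general matrix-decomposition fact that holds for arbitrary (not necessarily symmetric or PSD) $A$, $B$, $C$, and the paper pays for this generality with a slightly longer derivation and a contradiction argument for the ``only if'' direction, whereas your projection picture handles both directions symmetrically. Both proofs handle rank-deficient $K_{ZZ}$ correctly; your explicit worry about this case is addressed cleanly by the SVD computation. One small point of care worth making explicit: in the ``in particular'' clause you use $\calX\subseteq\calZ$, a standing assumption of the section, which is the right justification -- the paper relies on it implicitly as well.
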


The equality in Theorem~\ref{Thm:matcompeq} is an exact
matrix decomposition of the kernel matrix of~$X$ which does not require~$Z$
to be subsampled from~$X$. This differs markedly from Nystr\"om type
methods.

\section{The ideal-kernel duality and kernel learning}

We demonstrate how kernel-ideal duality and cross-kernel matrices can be employed for common learning tasks. They enable us to improve
computational cost, stability, and to obtain novel information about
the data manifold. In the setting of the
preceding section, we collect some basic observations on the
matrix $K:=K_{\le d}(X,Z) \cdot K_{\le d}(Z,Z)^{-1/2}$
under the assumption that~$Z$ feature-spans $\calZ$ (or $\R^n$).

\begin{Rem} (a) Assuming that arithmetic and kernel function evaluation can be
done at cost $O(1)$, it takes $O(M^3+MN)$ steps to compute~$K$ and
$O(M^2N+M^3)$ steps to obtain a singular value decomposition of $K$.
Thus both tasks require asymptotically linear time in~$N$.\\
(b) By Theorem~\ref{Thm:matcompeq}, the equality
$K_{\le d}(X,X)=K\cdot K^\top$ holds {\it exactly}.
Thus a SVD for $K_{\le d}(X,X)$ can be obtained
from a SVD $K=USV^\top$ of~$K$ via $K_{\le d}(X,X) = US^2U^\top$.
By~(a), the cost of this method is only $O(M^2N+M^3)$, which is linear in~$N$,
while the direct algorithm costs $O(N^3)$.\\
(c) If $Z$ is chosen to be a subsample of~$X$, one recovers common subsampling
strategies. However, the feature generating matrix~$Z$ can be chosen completely
independently from~$X$. Hence the matrix $K_{\le d}(Z,Z)^{-1/2}$ needs to be
precomputed only once for a given kernel degree and data dimension.
In particular, it is not necessary to subsample~$X$. For instance, we can choose~$Z$
randomly. Or, if desired, the matrix~$Z$ can be chosen to contain both a subsample
of~$X$ and random points.\\
(d) As mentioned in Theorem~\ref{Thm:CKMprops}.b, the right singular vectors
of~$K$ characterize the data manifold~$\calX$ from which~$X$ is sampled.
These right singular vectors cannot be obtained from $K_{\le d}(X,X)$ alone.
\end{Rem}

\subsection*{The Ideal PCA (IPCA) algorithm}
The kernel-ideal duality can be used to both speed up kernel PCA and additionally
yield feature functions cutting out the data manifold. By using the
matrix $K_{\le d}(X,Z)$ instead of $K_{\le d}(X,X)$, we obtain
the same feature projections and principal components as in ordinary
kernel PCA  (both in terms of the $X$- and the $Z$-basis), the
corresponding singular values, and novel feature projections which cut out
the data manifold (which we call ``right principal components'').
All of this is achieved by Algorithm~\ref{Alg:features} in linear time
and without the necessity of subsampling.

\begin{algorithm}[ht]
\caption[\texttt{IPCA} Computes left and right principal vectors of the cross-kernel]{\texttt{IPCA} Computes left and right principal vectors of the cross-kernel.\\
\textit{Input:} a degree $d\ge 1$, a data matrix~$X\in\R^{N\times n}$,
a matrix~$Z\in\R^{M\times n}$. A threshold $\epsilon>0$ or a cut-off $m$.\\
\textit{Output:} $m$ left principal components as columns of a matrix
$U\in\R^{N\times m}$ and $m$ right principal components as columns of
a matrix $V\in\R^{M\times m}$, as well as the corresponding singular values
on the diagonal of a matrix $S\in\R^{m\times m}$.
\label{Alg:features}}
\begin{algorithmic}[1]
    \STATE Compute the matrices $\Kxz=K_{\le d}(X,Z)\in\R^{N\times M}$
           and $\Kzz=K_{\le d}(Z,Z)\in\R^{M\times M}$.
    \STATE Compute $K=\Kxz\cdot \Kzz^{-1/2}$.
    \STATE If centering is desired, replace $K$ by
           $K-\frac{1}{N}\mathbf{1} K,$ where $\mathbf{1}\in\R^{N\times N}$ denotes
           the all-ones matrix.
	\STATE Compute the truncated singular value decomposition
	       $U S V^\top$ of~$K$, with $U\in\R^{N\times m}, V\in \R^{M\times m}$,
	       and $S=\diag (\sigma_1,\dots,\sigma_m)$. (If $\epsilon$ is given, then $\sigma_m$ is the
	       smallest singular value with $\sigma_m\ge \epsilon$.)
    \STATE Output $U,V,S$.
\end{algorithmic}
\end{algorithm}

For dimension reduction tasks, the important quantities are evaluations of
the principal components. The left principal components agree with the
ones from PCA, while the orthogonal of the space generated by the
right principal components is used in Algorithm~\ref{Alg:evalfeatures}
to project onto the data manifold.

\begin{algorithm}[ht]
\caption[\texttt{eval-IPCA} Computes IPCA features of the data]
{\texttt{eval-IPCA} Computes IPCA features of the data.\\
\textit{Input:} a degree $d\ge 1$, a data matrix $X\in\R^{N\times n}$,
a matrix~$Z\in R^{M\times n}$, a threshold $\epsilon>0$,
and a test data point $x_t\in\R^n$.\\
\textit{Output:} left principal features $u\in \R^m$ (PCA-like), right principal
features $v\in\R^m$ (PCA-like) and certifying features $v^\perp \in \R^m$ (IPCA-like). The entries of~$u$ and $v$ are local, whitened coordinates on
the data manifold~$\calX$ ($u$ in $X$- and $v$ in $Z$-basis), and the entries of~$v^\perp$ are orthogonal coordinates
on~$\calX$ (i.e., they should be close to zero for $x_t\in\calX$).
 \label{Alg:evalfeatures}}
\begin{algorithmic}[1]
    \STATE Apply the \texttt{IPCA} algorithm to obtain matrices
           $U,S,V$. The columns of $U\in\R^{N\times m}$ contain $m$ left
           principal vectors and the columns of $V\in\R^{M\times m}$ contain
           $m$ right principal vectors.
    \STATE Compute the evaluation vectors $\kappa_X=K(x_t,X)$ and $\kappa_Z=K(x_t,Z)$.
    \STATE Compute (or pass over from IPCA) the matrix $\Kzz^{-1/2}=K_{\le d}(Z,Z)^{-1/2}$.
	\STATE Compute $P = I - VV^\top$, where $I$ is the identity matrix
	       of size~$M$.
    \STATE Output: $u = \kappa_X\cdot U\cdot S^{-1}\quad v = \kappa_Z\cdot V\cdot S^{-1} \quad v^\perp = \kappa_Z\cdot \Kzz^{-1/2}\cdot P$
\end{algorithmic}
\end{algorithm}

The vector $u$ contains exactly the principal feature evaluations of~$x$
one would obtain from kernel PCA. The vector $v$ is the IPCA-type right analogue, while~$v^\perp$ can be seen as a measure for
how closely~$x$ lies on the data manifold. Neither $v$ nor $v^\perp$ can be obtained from any algorithm
involving only the matrix $K_{\le d}(X,X)$.

\section{Experimental Validation}
\label{Sec:KPCA}

We validate the main claims of our paper experimentally. We show, in this section: (a) the left principal IPCA features obtained from the cross-kernel are equal to the principal features obtained from kernel PCA~\cite{KPCA1998} and the kernel matrix $\Kxx$ (\ref{Sec:kPCA}) but faster to obtain, (b) the vanishing features obtained from IPCA can be used for manifold learning with kernels, and (c) the IPCA vanishing features outperform kernel PCA features in the USPS classification scenario in terms of compactness, computational cost, and accuracy.

{\bf Data sets.}
Our synthetic data below are: (i) points uniformly from a circle of radius $10$, then
perturbed by independent $2$-dimensional standard, Gaussian random vectors, and (ii) $2$ circles
on a sphere of radius $5$. For each circle, $200$ points are sampled uniformly, then $3$-dimensional Gaussian noise with variance $0.1$ is added. The real world data set used below is (iii) the USPS handwritten digits data set~\cite{USPS}. All experiments below are done with the inhomogenous polynomial kernel of degree $2$, with $\theta = 1$.

{\bf Kernel PCA vs IPCA.}
\label{Sec:kPCA}
We compared kernel PCA features and IPCA left features on the (ii) $2$ circles data. We fixed $M=12$ as the number of feature spanning points, and computed principal components for $N=10,20,\dots, 1000$ data points. The six non-degenerate principal vectors and their principal values were equal to the IPCA left feature vectors and the squares of the corresponding singular values, up to machine precision. Figure~\ref{fig:PCA_runtime} shows log-runtimes in comparison. The runtime for IPCA is lower by orders of magnitude.
\begin{figure}[htbp]
	\centering
    \includegraphics[width=0.45\textwidth]{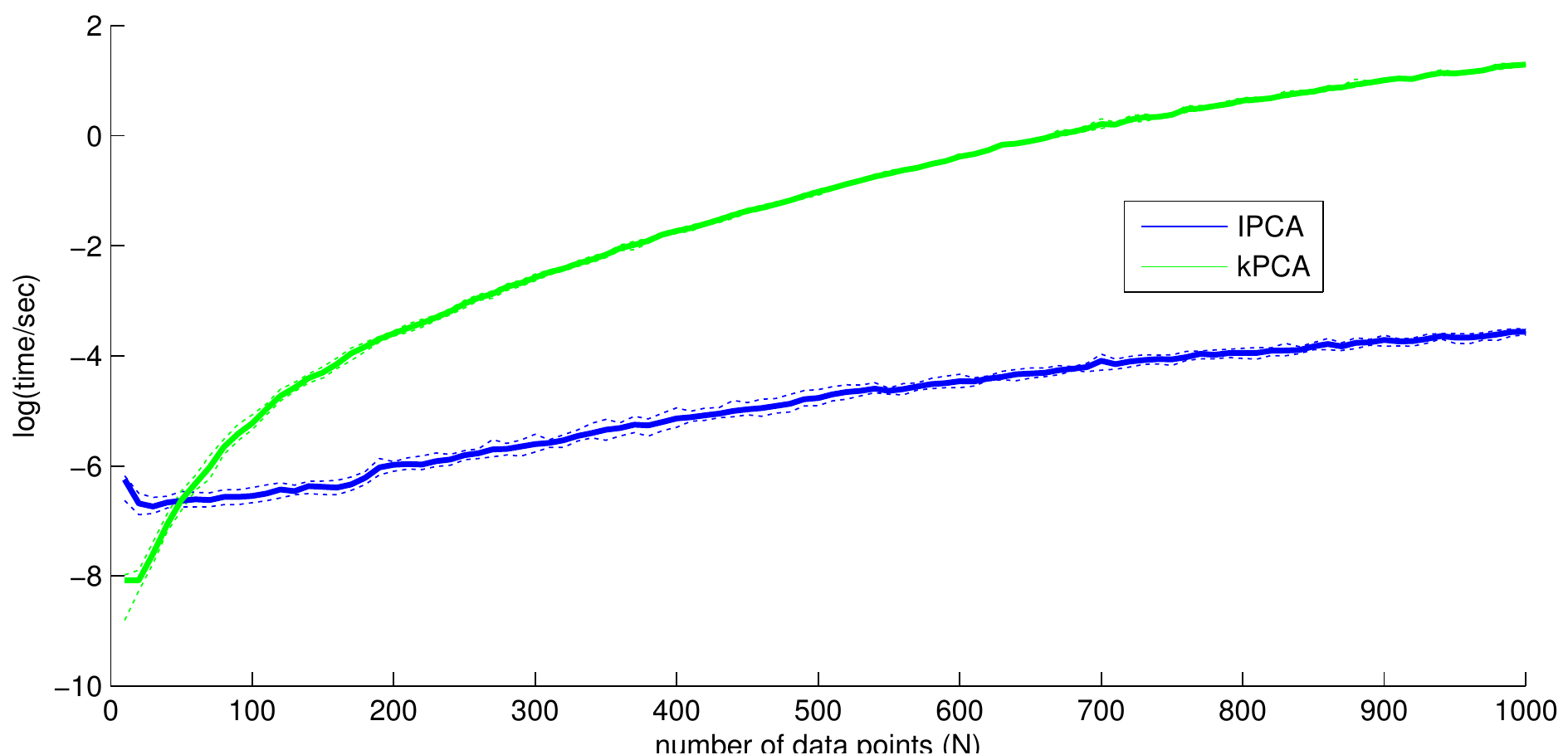}
	\caption{Runtime of kernel PCA and IPCA. Thick line is mean, dotted lines are 0.1 and 0.9 quantiles over $20$ repetitions.}
	\label{fig:PCA_runtime}
\end{figure}

{\bf Manifold learning with cross-kernel and IPCA.}
\label{Sec:dimred}
We used IPCA to learn the data manifold from 200 samples in the (i) circle and the (ii) 2 circle example. The manifold is the approximated by the points with low evaluation of the normalized IPCA certifying features, the result can be seen in figure \ref{fig:manifoldEst}. The singular value threshold was fixed to $m=5$ for (i) and $m=9$ for (ii). The certifying features were evaluated on a fine grid, grid points with evaluation norm near the training data were considered in-manifold, with 0.1-quantile for (i) and 0.01-quantile for (ii).

\begin{figure}[htbp]
	\centering
		\subfigure[]{\includegraphics[width=0.35\textwidth]{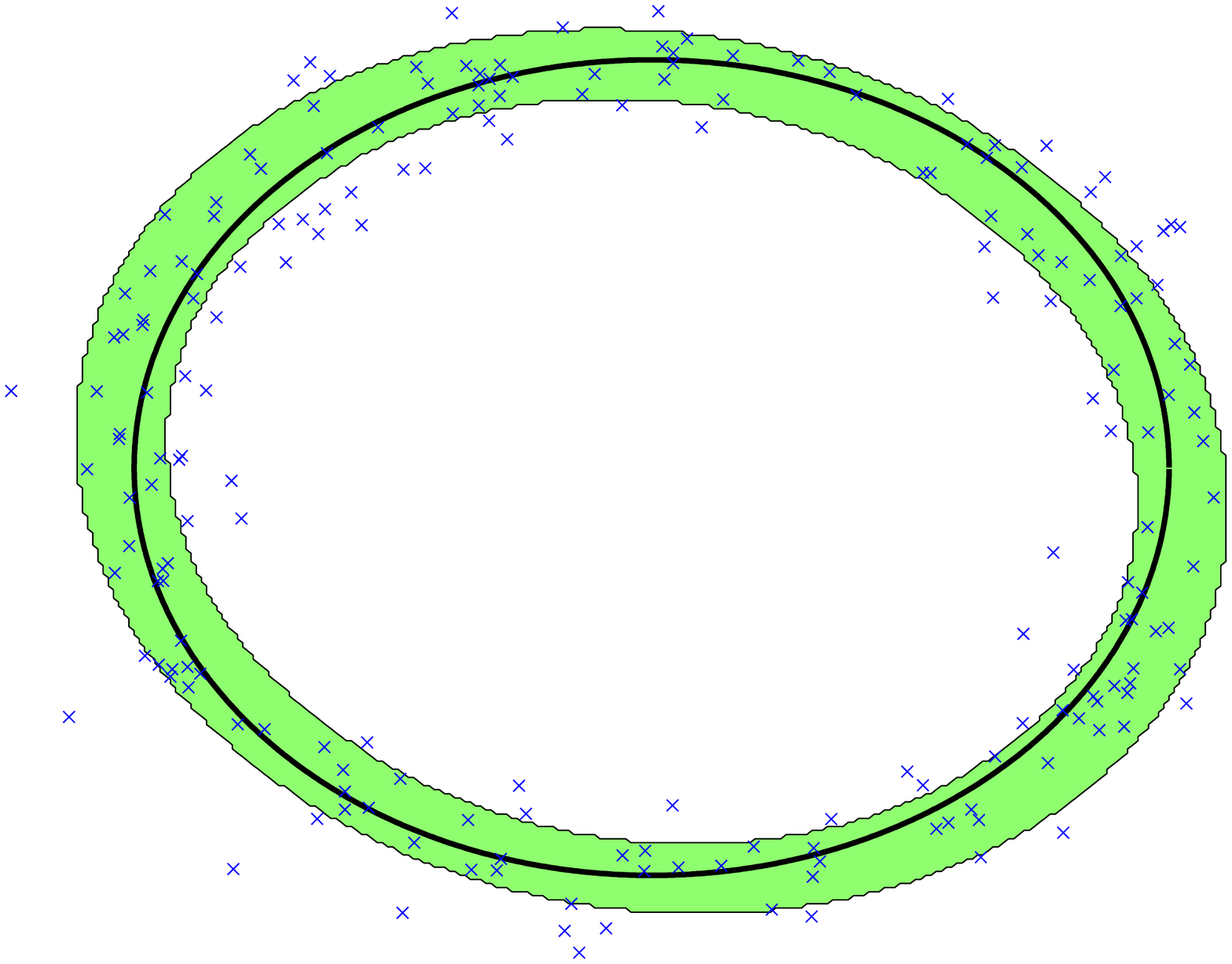}\label{fig:manifoldEst.circle}}\hfill
		\subfigure[]{\includegraphics[width=0.4\textwidth]{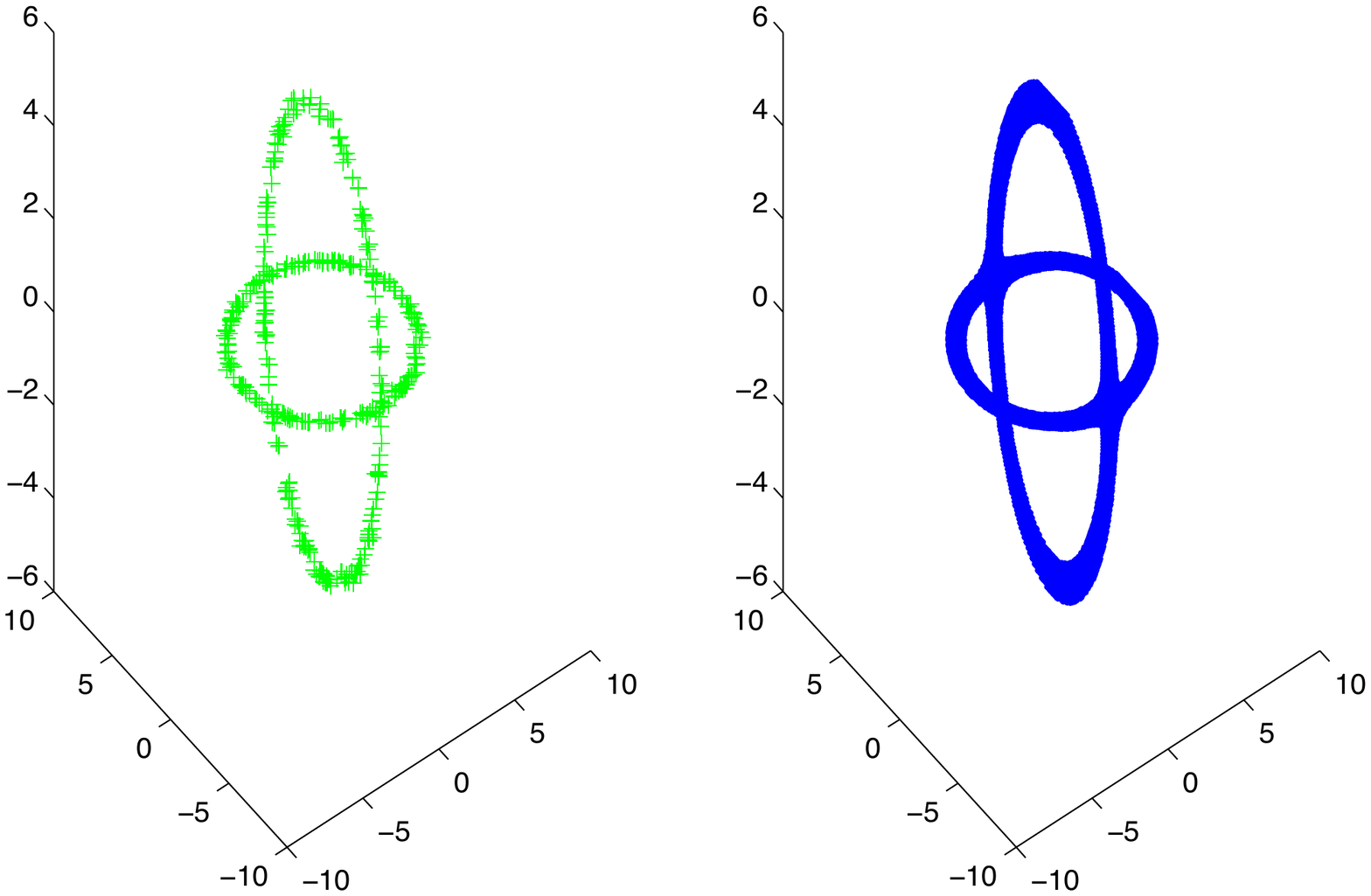}\label{fig:manifoldEst.sphere}}
	\caption{Manifold estimation with IPCA. The crosses are the data points. The estimated manifolds are the light green ring in~\ref{fig:manifoldEst.circle} and the two blue rings in~\ref{fig:manifoldEst.sphere}.}
	\label{fig:manifoldEst}
\end{figure}

{\bf Classification with IPCA certifying vectors.}
\label{Sec:class}
We used the IPCA features for classification on the USPS data set. In all experiments,
$700$ training examples are used, and the rest of the USPS data set is used for evaluation.

Denote the different training classes $X_1,\ldots, X_t$ and the pooled data by $X_{\text{train}} := \cup_{i=1}^t X_i$.  The natural form of a one-vs-all classifier using IPCA certifying features is as follows: (1) to train, compute $\operatorname{IPCA}(X_i,Z)$
for each of the $X_i$ with the same fixed $Z$; to classify a set of points $X_{\text{test}}$,
compute the right-evaluation vectors using Algorithm \ref{Alg:evalfeatures} for each class
and assign the label corresponding to the minimum norm evaluation.  The
correctness of this construction follows directly from Theorem \ref{Thm:CKMprops}. All
the stability and convergence guarantees of IPCA go through without modification.

We compare three choices for the feature generating points $Z$: (1) choose the feature-generating matrix $Z$
with standard normal Gaussian entries; (2) choose it as a uniform random sub-sample of $X_{\text{train}}$; (3) choose it as a degenerate sub-sample of $X_{\text{train}}$; we simulate a high number of repetitions by sub-sampling from a sub-sample of size $N/4$.

Figure \ref{fig:classification}(a) summarizes the results. Non-degenerate sub-sampling (2) is best, since it takes advantage of the implicit dimension-reduction of expressing the $\calX_i$ in a coordinate system derived
$\calX\subset \mathbb{R}^n$, which can be of much lower dimension than $n$.  However, when the
sampling is \emph{degenerate} as in (3), e.g., with many repetitions, a random feature-generating matrix of the
same size as in (1) gives better performance. Eventually, with growing $M$ all choices of $Z$ yield comparable results. As it can not be distinguished a-priori whether the training sample is degenerate or not, random $Z$ can be preferrable.

We also compared the IPCA certifying features to the IPCA left and right principal features (PCA-like), by extracting $m=32$ features, then performing linear classification via LIBLINEAR \cite{LIBLINEAR}. The feature-generating matrix $Z$ is chosen randomly with standard normal entries. Figure \ref{fig:classification}(b)--(c) show that $m=32$ certifying features yields more accurate and faster results than the same number of principal features. The results reported in~\cite{KPCA1998} for classical kernel PCA are comparable to the PCA-like features, which all are outperformed by the IPCA certifying features.

\begin{figure}[htbp]
	\centering
		\subfigure[]{\includegraphics[width=0.3\textwidth]{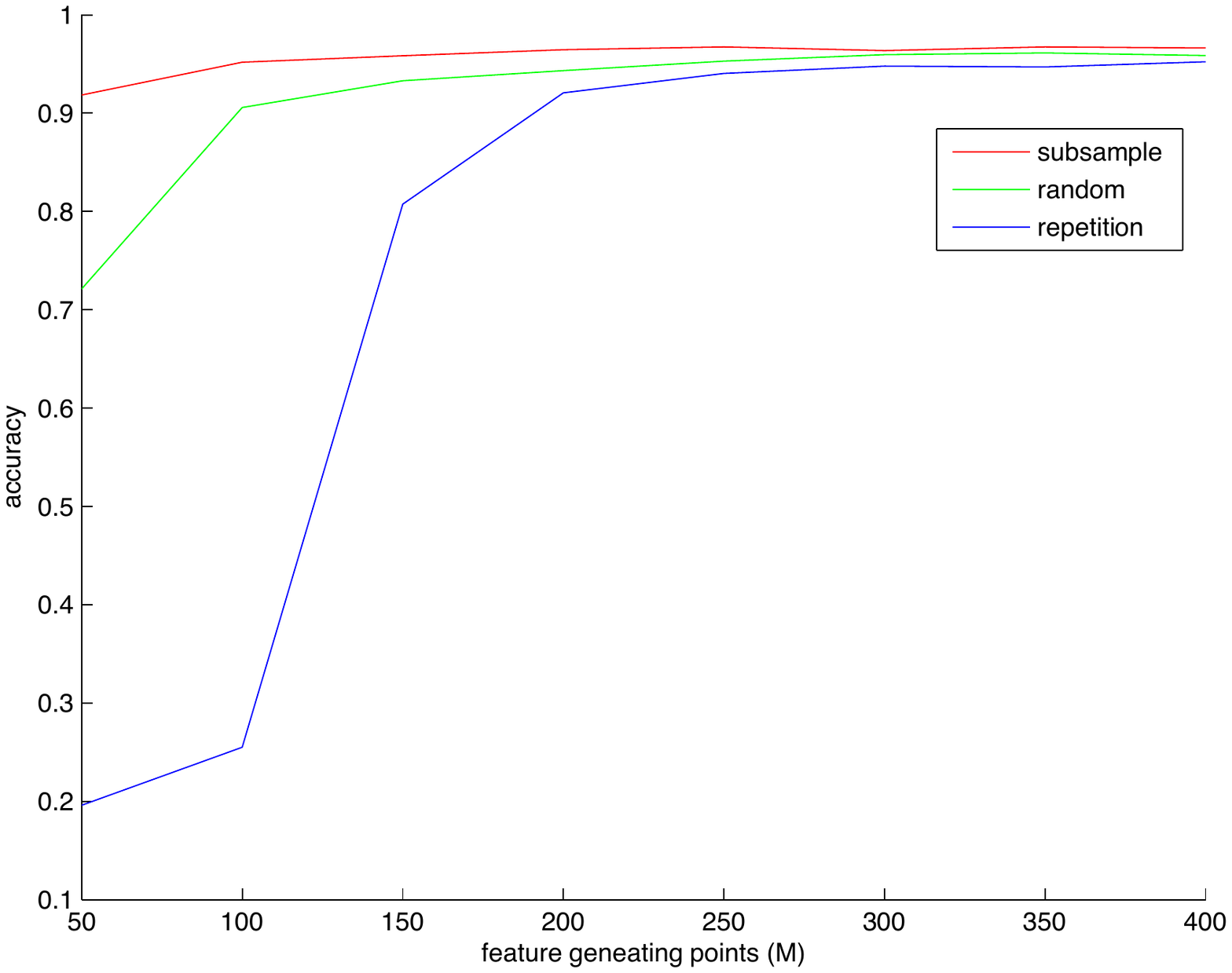}}
		\subfigure[]{\includegraphics[width=0.3\textwidth]{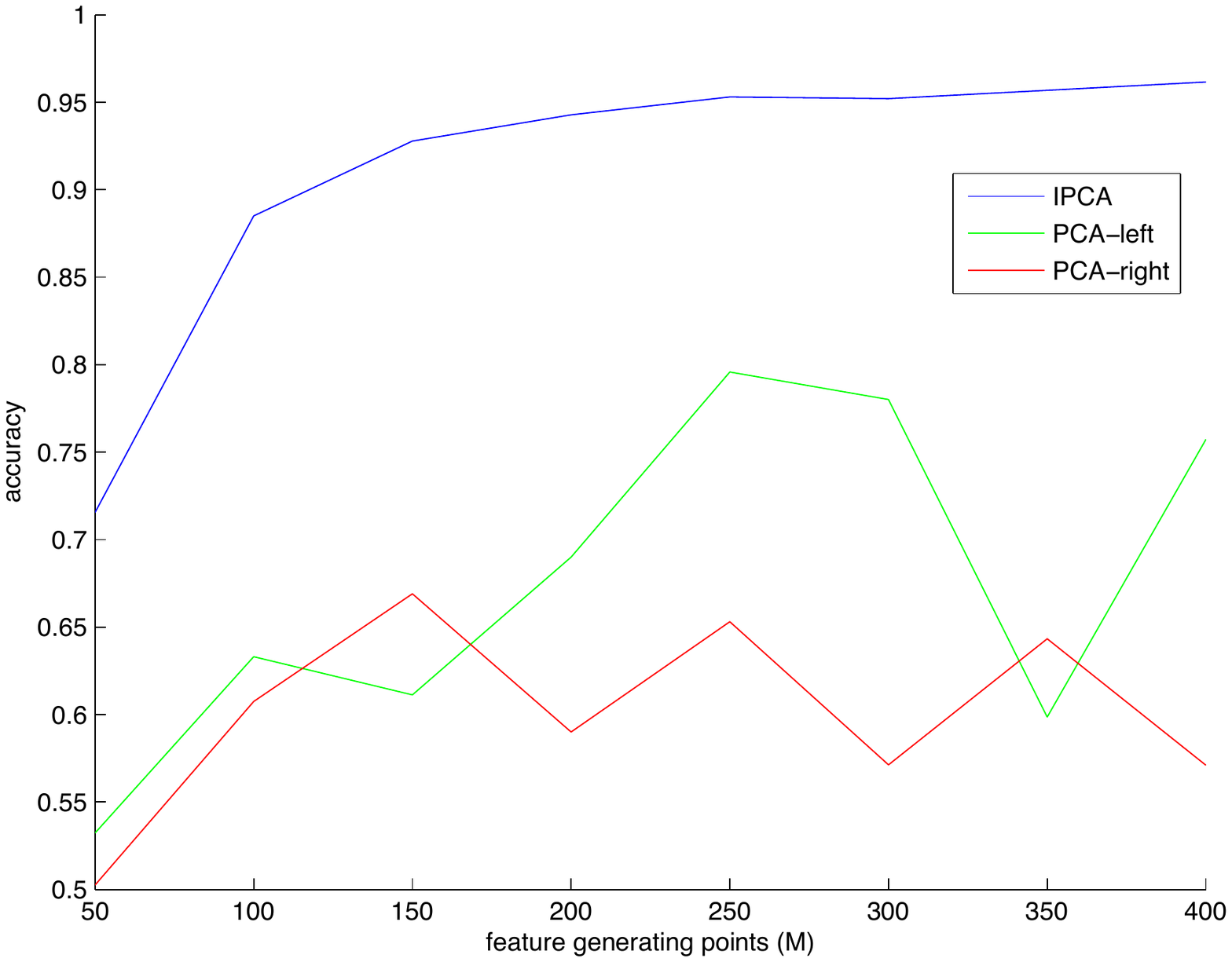}}
		\subfigure[]{\includegraphics[width=0.3\textwidth]{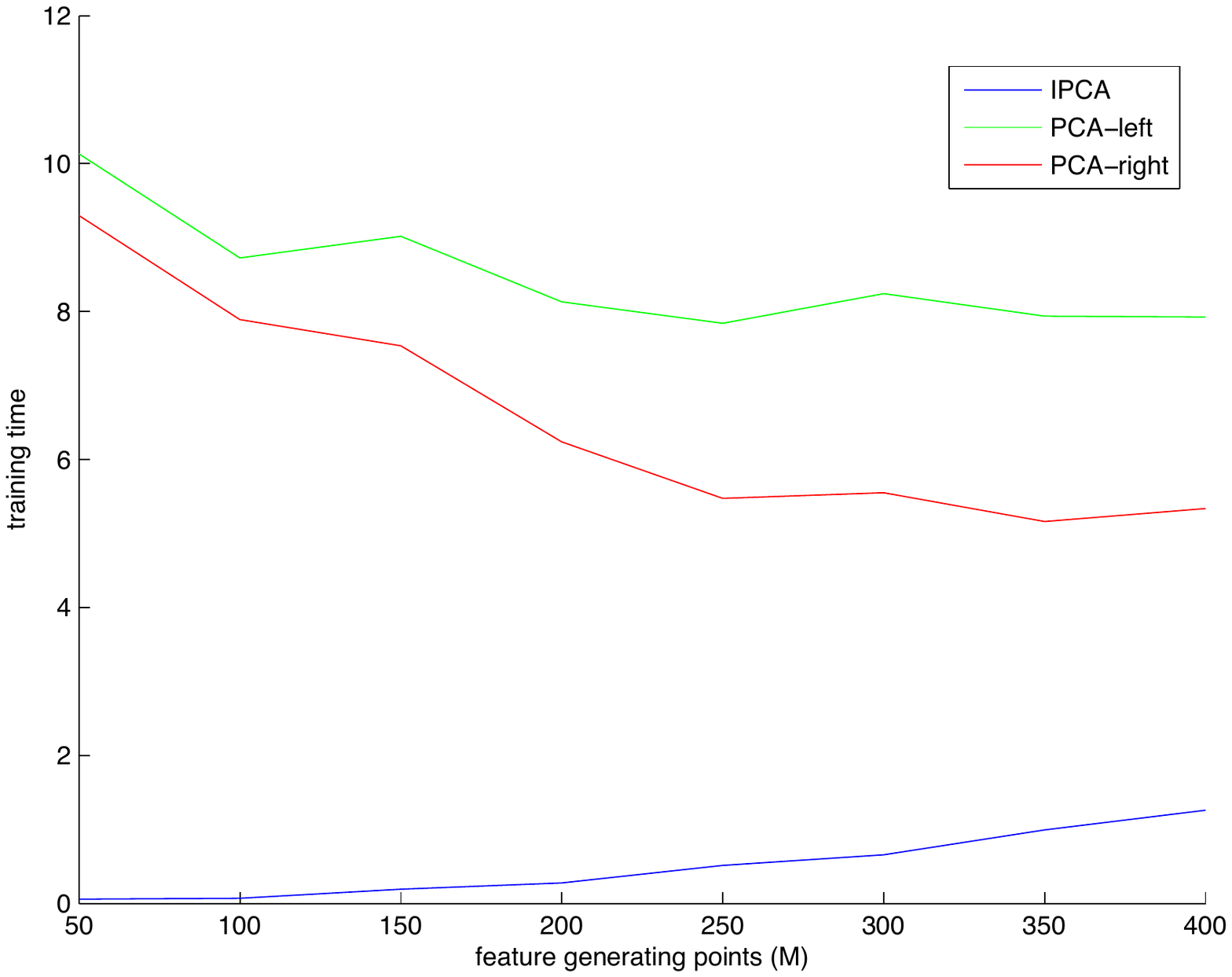}}
	\caption{(a) Classification with IPCA certifying features, comparing random $Z$ and subsampled $Z$ from generic and degenerate sampling; (b) performance comparison of the three IPCA features - left and right principal features are closely related to kernel PCA features; (c) speed comparison of IPCA feature based classifiers.}
	\label{fig:classification}
\end{figure}

Above comparing different modes of instanciating IPCA, the experiments also show that IPCA is capable of extracting competitive features for considerably less and better scaling computational cost than kernel PCA, and achieves comparable performance on a much smaller number of relevant features.

%
%

\subsubsection*{Acknowledgments}
%
LT is supported by the European Research Council under the European Union's Seventh Framework
Programme (FP7/2007-2013) / ERC grant agreement no 247029-SDModels.  This research was conducted
while MK and LT were guests of FK at Mathematisches Forschungsinstitut Oberwolfach, supported
by FK's Oberwolfach Leibniz Fellowship.

\subsubsection*{References}

\small{

\begingroup
\let\oldsection\section
\renewcommand{\section}[2]{}%

\endgroup

} 

\newpage

\section*{Appendix}

{\normalsize

\begin{proof}{\it (of Proposition~\ref{Prop:dual})}\\
Let $m=\binom{n+d}{d}$.
Under the isomorphism $\R[\vt]_{\le d} \rightarrow
\R^m$ given by $\sum_{\alpha} c_\alpha \vt^\alpha
\mapsto (c_\alpha)_{|\alpha |\le d}$, the polynomial $\mathbf{k}_{\le d}(x,\vt)$
maps to $(\gamma_\alpha^2 x^\alpha)_{|\alpha |\le d}$
for every $x\in\R^n$. Up to scaling the coordinates by 
$\gamma_\alpha^2$, the image of this isomorphism is a Veronese variety.
Now the claim follows from the fact that this Veronese
variety is irreducible and non-degenerate.
\end{proof}

\begin{proof}{\it (of Proposition~\ref{Prop:isomphi})}\\
First we show~(a). Both vector spaces have the same dimension. 
The map~$\phi$ is $\R$-linear and sends the basis vector ${\vt}^\alpha$
to the basis vector $(1/\gamma_\alpha)\cdot e_\alpha^\ast$,
i.e., to the map defined by $e_\beta \mapsto (1/\gamma_\alpha)\,
\delta_{\alpha\beta}$. From this the claim follows.

To prove~(b), we use the standard scalar product on~$\calF_{\le d}$
to identify it with $(\calF_{\le d})^\vee$. This (non-canonical) 
isomorphism identifies the standard basis vector~$e_\alpha$
with $e_\alpha^\ast$. Thus the claim follows from~(a).
\end{proof}

\begin{proof}{\it (of Corollary~\ref{Cor:KDF})}\\
It is easy to check that $\Phi_{\le d}^\ast = 
\eval\circ\phi^{-1}$. Since both maps $\eval$ and $\phi^{-1}$ are
isomorphisms, the claim follows.
\end{proof}

\begin{proof}{\it (of Proposition~\ref{Prop:scalarprod})}\\
This follows by writing $f=\sum_{\alpha} c_\alpha \vt^\alpha$
with $c_\alpha\in \R$ and computing
$$
\langle f, \mathbf{k}_{\le d}(x,\vt)\rangle_\phi \;=\;
\langle {\textstyle\sum_\alpha} c_\alpha \vt^\alpha,\, 
{\textstyle\sum_\beta} \gamma_\beta^2 x^\beta \vt^\beta \rangle_\phi 
\;=\; {\textstyle\sum_\alpha} c_\alpha \gamma_\alpha^2 x^\alpha
\langle \vt^\alpha, \vt^\alpha \rangle_\phi 
\;=\; {\textstyle\sum_\alpha} c_\alpha x^\alpha  \;=\; f(x)
$$
\end{proof}

\begin{proof}{\it (of Theorem~\ref{Thm:dual-manifold})}\\
Part~(a) follows from the definition of~$\Id(\calX)$ and 
Prop.~\ref{Prop:scalarprod}. 

To prove~(b), we first
use~(a) in order to map $\Id(\calX)_{\le d}$ isomorphically 
into~$(\calF_{\le d})^\vee$. The result is the orthogonal
complement of $U=\langle \sum_{|\alpha |\le d} \gamma_\alpha x^\alpha
e_\alpha^\vee \mid x\in \calX \rangle$. Since we are using the standard scalar
products on~$\calF_{\le d}$ and its dual, this orthogonal complement
equals the annihilator of the preimage of~$U$ in~$\calF_{\le d}$.
That preimage is generated by the elements $\sum_{|\alpha |\le d}
\gamma_\alpha x^\alpha e_\alpha  =\Phi_{\le d}(x)$ with $x\in \calX$.
In other words, the vector space $U^\perp$ is the annihilator of
the feature span of~$\calX$, as claimed.

Finally, we note that~(c) follows from~(b) by identifying 
$\calF_{\le d}$ with its dual via the standard scalar product.
\end{proof}

\begin{proof}{\it (of Theorem~\ref{Thm:CKMprops})}\\
First we prove claim~(a). Letting $v_i=\Phi_{\le d}(x_i)\in \R^m$
and  $w_j=\Phi_{\le d}(z_j)\in \R^m$ for $i=1,\dots,N$ and
$j=1,\dots,M$, we have to show that
the rank of the matrix $G=(\langle v_i,w_j\rangle)_{i,j}$
equals the dimension of $\langle v_1,\dots,v_N\rangle$.
By removing columns and rows of~$G$, we may assume that $\{v_1,\dots,v_N\}$
and $\{w_1,\dots,w_M\}$ are linearly independent. Using the hypothesis
$\langle v_1,\dots,v_N\rangle \subseteq \langle w_1,\dots,w_M\rangle$,
we see that $M\ge N$ and may assume $v_i=w_i$ for $i=1,\dots,N$.
Then the first~$N$ columns of~$G$ are the Gram matrix of~$V=(v_1,\dots,v_N)$
and the claim follows from the fact that the rank of the Gram matrix
of~$V$ equals the rank of~$V$.

Next we show~(b). The implication ``$\Leftarrow$'' is trivially true.
To prove the converse, we note that the hypothesis implies 
that there are $a_1,\dots,a_N\in\R$ such that $\Phi_{\le d}(c) 
- \sum_{i=1}^N a_i \Phi_{\le d}(x_i)$ is orthogonal to $\fspan(Z)$.
Since this vector is contained in $\fspan(Z)$, it is zero.
To prove the additional claim, we note that $\Phi_{\le d}(c)
\in\fspan(X)$ implies that all linear forms in the annihilator
of~$\fspan(X)$ vanish at~$c$. Via $\phi^{-1}$, these linear forms
correspond to the polynomials in~$\Id(\calX)_{\le d}$.
Since~$\calX$ is cut out by these polynomials, we get $c\in\calX$,
as claimed.

For the proof of part~(c) we use the isomorphism~$\phi$ to write the
hypothesis $f\in\Id(\calZ)_{\le d}^\perp$ as $f=\sum_{j=1}^M
c_j \mathbf{k}_{\le d}(z_j,\vt)$ for some $c_j\in\R$. We have
$f\in\Id(\calX)$ if and only if $\phi(f)$ is orthogonal to
$\langle \phi(\mathbf{k}_{\le d}(x_i,\vt))\rangle$.
Therefore we require that we have
\begin{align*} 
0 &\;=\; \langle \phi(f), \phi(\mathbf{k}_{\le d}(x_i,\vt)) \rangle
\;=\; \langle {\textstyle\sum_{j=1}^M} c_j 
{\textstyle\sum_\alpha} \gamma_\alpha z_j^\alpha e_\alpha^\vee,\;
{\textstyle\sum_\alpha} \gamma_\alpha x_i^\alpha e_\alpha^\vee \rangle\\
&\;=\; {\textstyle\sum_\alpha} \gamma_\alpha^2 
\left( {\textstyle\sum_{j=1}^M} c_j z_j^\alpha \right) \, x_i^\alpha
\end{align*}
for $i=1,\dots,N$. On the other hand, the entry in position $(i,j)$
of~$K_{\le d}(X,Z)$ is $k_{\le d}(x_i,z_j) = \langle \Phi_{\le d}(x_i),\,
\Phi_{\le d}(z_j) \rangle = \langle \sum_\alpha \gamma_\alpha x_i^\alpha
e_\alpha,\, \sum_\alpha \gamma_\alpha z_j^\alpha e_\alpha \rangle =
\sum_\alpha \gamma_\alpha^2 x_i^\alpha z_j^\alpha$. Hence 
$(c_1,\dots,c_M)^\top$ is in the nullspace of $K_{\le d}(X,Z)$ if and only
if $\sum_{j=1}^M \sum_\alpha \gamma_\alpha^2 x_i^\alpha z_j^\alpha c_j =0$
for $i=1,\dots,N$. As we have seen, this is equivalent to $f\in\Id(\calX)$.

Finally we show~(d). Using Theorem~\ref{Thm:dual-manifold}, we see
that the hypothesis yields $\Id(\calX)^\perp = \langle
\mathbf{k}_{\le d}(x_i,\vt) \mid i=1,\dots,N\rangle$ and
$\Id(\calZ)^\perp = \langle\mathbf{k}_{\le d}(z_j,\vt) \mid 
j=1,\dots,M\rangle$. Therefore there exist numbers
$c_j\in \R$ such that we have $f=\sum_{j=1}^M
c_j \mathbf{k}_{\le d}(z_j,\vt)$, and the question
is whether there exists a representation
$f=\sum_{i=1}^N a_i \mathbf{k}_{\le d}(x_i,\vt)$ with $a_i\in\R$.
By the hypothesis, the matrix $K_{\le d}(Z,Z)$ is invertible, since it is the
Gram matrix of the full rank matrix $(\Phi_{\le d}(z_j))$.
So, knowing $(c_1,\dots,c_M)$ is equivalent to knowing
$$
(c_1,\dots,c_M)\cdot K_{\le d}(Z,Z) \;=\; 
\bigl( {\textstyle\sum_\alpha} \gamma_\alpha^2
\left( {\textstyle\sum_{\ell=1}^M} c_\ell z_\ell^\alpha \right) \, 
z_j^\alpha \bigr)_j
$$

To prove the implication ``$\Rightarrow$'' we can write the latter tuple as
$$
\bigl( {\textstyle\sum_\alpha} \gamma_\alpha^2 \left( {\textstyle\sum_{i=1}^N}
a_i x_i^\alpha \right) \, z_j^\alpha \bigr)_j \;=\; 
(a_1,\dots,a_n) \cdot K_{\le d}(X,Z)
$$
and conclude that $c = (a_1,\dots,a_n)\cdot K_{\le d}(X,Z)\cdot
K_{\le d}(Z,Z)^{-1}$. For the implication ``$\Leftarrow$'', we are given
the equalities
$$
{\textstyle\sum_\alpha} \gamma_\alpha^2 \left( {\textstyle\sum_{\ell=1}^M} 
c_\ell z_\ell^\alpha \right) \, z_j^\alpha \;=\;
{\textstyle\sum_\alpha} \gamma_\alpha^2 \left( {\textstyle\sum_{i=1}^N}
a_i x_i^\alpha \right) \, z_j^\alpha
$$
for $j=1,\dots,M$. This means that, for $j=1,\dots,M$, the vectors 
$\sum_{\ell=1}^M \gamma_\alpha c_\ell z_\ell^\alpha e_\alpha - 
\sum_{i=1}^N \gamma_\alpha a_i x_i^\alpha e_\alpha$ and 
$\sum_{j=1}^M \gamma_\alpha z_j^\alpha e_\alpha$ are orthogonal 
with respect to the standard scalar product in $\calF_{\le d}$. 
Thus the first vector is both orthogonal to $\fspan(Z)$ and contained 
in $\fspan(Z)$, i.e., it is zero.
Consequently, we get $\sum_{\ell=1}^M c_\ell z_\ell^\alpha =
\sum_{i=1}^N a_i x_i^\alpha$, and as we have seen, this yields
$f\in \Id(\calX)_{\le d}^\perp$, as claimed.
\end{proof}

\begin{proof}{\it (of Theorem~\ref{Thm:matcompeq})}\\
Let~$Y\in\R^{(N+M)\times n}$ be the matrix obtained by row concatenating~$X$ 
and~$Z$. Observe that, by construction, there are projection matrices~$P$ and~$Q$ 
such that $K(X,X)=P\cdot K(Y,Y)\cdot  P^\top$ and $K(X,Z)=P\cdot K(Y,Y)\cdot Q^\top$,
as well as $K(Z,Z)=Q\cdot K(Y,Y)\cdot Q^\top$.

To prove the implication ``$\Rightarrow$'', we suppose that
$\fspan(X)\nsubseteq \fspan(Z)$. By the rank-nullity theorem, this implies 
$\frk(Y) > \frk(Z)$. Thus Theorem~\ref{Thm:CKMprops}.a yields 
$\rk K_{\le d}(Z,Z) < \rk K_{\le d}(Y,Y)$. The claim then follows by
applying Lemma~\ref{Lem:matcomp} to $A=K_{\le d}(Y,Y)$ and $B=Q$ and $C= Q^\top$.

For the reverse implication, we first note that
$\fspan(X)\subseteq \fspan(Z)$ implies $\fspan(Y) = \fspan(Z)$. Thus 
we have $\frk(Y) = \frk(Z)$ and Theorem~\ref{Thm:CKMprops}.a shows 
$\rk K_{\le d}(Z,Z) = \rk K_{\le d}(Y,Y)$. Now the claim follows
by applying Lemma~\ref{Lem:matcomp} in the same way as before.
\end{proof}

The following Lemma provides the key ingredient for the
preceding proof.

\begin{Lem}\label{Lem:matcomp}
Let $A\in\R^{m\times n}$, let $B\in\R^{n\times k}$, and let $C\in\R^{\ell\times m}$. 
Then the matrix equality $A = AB (CAB)^+ CA$ holds if and only if \ $\rk A = \rk CAB$.
\end{Lem}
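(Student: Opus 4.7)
The plan is to attack the two implications separately, using the rank inequality $\rk(XYZ) \le \min\{\rk X, \rk Y, \rk Z\}$ together with the basic property $M M^+ M = M$ of the Moore-Penrose pseudoinverse.

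For the direction ``$\Rightarrow$'', I would not do anything subtle. If $A = AB(CAB)^+ CA$, then the rank of the right-hand side is bounded by $\rk((CAB)^+) = \rk(CAB)$, so $\rk A \le \rk CAB$. On the other hand, $\rk CAB \le \rk A$ always holds since $CAB$ is a product containing $A$. This forces $\rk A = \rk CAB$.

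For the direction ``$\Leftarrow$'', the idea is to use the rank hypothesis to write $A$ as a left and right multiple of $AB$ and $CA$ respectively, and then apply $MM^+M = M$ to $M = CAB$. First I would observe that the hypothesis $\rk A = \rk CAB$, sandwiched by $\rk CAB \le \rk AB \le \rk A$ and $\rk CAB \le \rk CA \le \rk A$, forces $\rk AB = \rk CA = \rk A$. This means $\colspan(AB) = \colspan(A)$ and $\rowspan(CA) = \rowspan(A)$, so there exist matrices $U$ and $V$ with $A = AB \cdot U$ and $A = V \cdot CA$. Substituting the second into the first factor gives $AB = V \cdot CAB$, and substituting the first into the second factor gives $CA = CAB \cdot U$.

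Putting the pieces together, I compute
\[
AB(CAB)^+ CA \;=\; (V\cdot CAB)\,(CAB)^+\,(CAB\cdot U) \;=\; V\,\bigl[CAB\,(CAB)^+\,CAB\bigr]\,U \;=\; V\cdot CAB\cdot U \;=\; V\cdot CA \;=\; A,
\]
where the middle equality uses the defining identity of the pseudoinverse. This closes the implication and the proof. I do not anticipate any real obstacle; the only thing to be careful about is verifying that the rank hypothesis really does yield both of the equalities $\colspan(AB) = \colspan(A)$ and $\rowspan(CA) = \rowspan(A)$, which in turn produce the ``filler'' matrices $U$ and $V$ that let the pseudoinverse identity cancel cleanly.
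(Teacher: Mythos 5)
Your proof is correct. The ``$\Rightarrow$'' direction is essentially the same as the paper's: the paper argues by contrapositive, you argue directly, but the key chain of inequalities $\rk A \ge \rk\bigl(AB(CAB)^+CA\bigr)$, $\rk(CAB)^+ = \rk CAB \le \rk A$ is identical.

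The ``$\Leftarrow$'' direction is where you genuinely diverge, and your route is sounder. The paper extracts from the rank equalities the claim $A = C^+CA = ABB^+$ and then sandwiches $CAB = CAB(CAB)^+CAB$ between $C^+$ and $B^+$. But $A = C^+CA$ requires $\colspan(A)\subseteq\rowspan(C)$ and $A = ABB^+$ requires $\rowspan(A)\subseteq\colspan(B)$, and neither follows merely from $\rk CA = \rk AB = \rk A$; e.g.\ $A = (1,1)^\top$, $C = (1,0)$, $B = (1)$ has $\rk CA = \rk A = 1$ yet $C^+CA = (1,0)^\top \ne A$. What the rank equalities actually give is precisely what you use: the always-true inclusions $\colspan(AB)\subseteq\colspan(A)$ and $\rowspan(CA)\subseteq\rowspan(A)$ become equalities, so there exist $U$, $V$ with $A = AB\cdot U = V\cdot CA$. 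Cancelling $CAB(CAB)^+CAB = CAB$ between $V$ and $U$ then finishes without invoking any projector identities on $C$ or $B$. In short, you prove the same lemma via filler matrices from span equalities rather than pseudoinverse projectors, and in doing so you close a gap in the paper's argument.
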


\begin{proof}
First we show the implication ``$\Leftarrow$''.
Since $\rk CAB = \rk A$, we have $\rk CA = \rk AB = \rk A$, and therefore 
$\colspan(CA) = \colspan(A)$ as well as $\rowspan(AB) = \rowspan(A)$. 
This implies $A=C^+CA=ABB^+$. The definition of the pseudoinverse
yields $CAB = CAB (CAB)^+ CAB$.
Multiplying this equality by~$C^+$ and $B^+$, we get
$A = C^+ CABB^+ = C^+ CAB (CAB)^+ CABB^+ = AB (CAB)^+ CA$,
as claimed.

To prove the reverse implication, suppose that $\rk A \neq \rk CAB$. 
Since the rank of a matrix does not increase by matrix multiplication,
we must have $\rk A> \rk CAB$. Since the pseudoinverse of~$CAB$
satisfies $\rk (CAB)^+ = \rk CAB$, we find 
$\rk A > \rk CAB = \rk (CAB)^+\ge \rk \left( AB (CAB)^+ CA\right)$,
in contradiction to the hypothesis.
\end{proof}

%
%
%

} 


\begin{thebibliography}{}

\bibitem{Aizerman64}
Mark~A. Aizerman, Emmanuel~M. Braverman, and Lev~I. Rozonoer.
\newblock Theoretical foundations of the potential function method in pattern
  recognition learning.
\newblock In \emph{Automation and Remote Control,}, number~25 in Automation and
  Remote Control,, pages 821--837, 1964.

\bibitem{BoserVapnik92}
Bernhard~E. Boser, Isabelle~M. Guyon, and Vladimir~N. Vapnik.
\newblock A training algorithm for optimal margin classifiers.
\newblock In \emph{Proceedings of the 5th Annual ACM Workshop on Computational
  Learning Theory}, pages 144--152. ACM Press, 1992.

\bibitem{LIBLINEAR}
R.-E. Fan, K.-W. Chang, C.-J. Hsieh, X.-R. Wang, and C.-J. Lin.
\newblock LIBLINEAR: A library for large linear classification.
\newblock \emph{Journal of Machine Learning Research} 9:\penalty0
1871--1874, 2008.

\bibitem{Scholkopf02}
Bernhard Sch{\"o}lkopf and Alexander~J Smola.
\newblock \emph{Learning with kernels}.
\newblock MIT Press, 2002.

\bibitem{KPCA1998}
Bernhard Sch{\"o}lkopf, Alexander Smola, and Klaus-Robert M{\"u}ller.
\newblock Nonlinear component analysis as a kernel eigenvalue problem.
\newblock \emph{Neural computation}, 10\penalty0 (5):\penalty0 1299--1319,
  1998.

\bibitem{Shawe-Taylor04}
John Shawe-Taylor and Nello Cristianini.
\newblock \emph{Kernel Methods for Pattern Analysis}.
\newblock Cambridge University Press, New York, 2004.

\bibitem{Vapnik95}
Vladimir N.\ Vapnik.
\newblock\emph{The Nature of Statistical Learning Theory}.
\newblock Springer Verlag, New York, 1995.

\bibitem{USPS}
USPS handwritten digits data set
\newblock \emph{A Database for Handwritten Text Recognition Research, J. J. Hull, IEEE PAMI 16(5) 550-554, 1994.}


\end{thebibliography}
\end{document}